\newtheorem{definition}{Definition}
\newtheorem{Theorem}{Theorem}[section]
\newtheorem{Lemma}[Theorem]{Lemma}
\newtheorem{assumption}[Theorem]{Assumption}
\newtheorem{prop}[Theorem]{Proposition}
\theoremstyle{remark}
\newtheorem{remark}[Theorem]{Remark}
\newcommand{\RR}{\mathbb{R}}
\newcommand{\MM}{\mathcal{M}}
\newcommand{\NN}{\mathcal{N}}
\newcommand{\OO}{\mathcal{O}}
\newcommand{\GG}{\mathcal{G}}
\newcommand{\defeq}{\stackrel{\bigtriangleup}{=}}
\newcommand{\abs}[1]{\left\vert #1 \right\vert}
\newcommand{\norm}[1]{\left\Vert #1 \right\Vert}
\newcommand{\hrho}{h\text{-}\rho\text{-}\delta}
\DeclareMathOperator*{\argmin}{\arg\!\min}
\begin{document}
\title{ \centering{Approximation of Functions over Manifolds: \\ A Moving Least-Squares Approach}}
\author{Barak Sober${^1}$~~Yariv Aizenbud${^{1,2}}$~~David Levin${^1}$ \\
\small{${^1}$ School of Mathematical Sciences, Tel Aviv University, Israel}
\\
\small{${^2}$ Faculty of Information Technology, Jyv\"askyl\"a University, Jyv\"askyl\"a, Finland}
}

\maketitle
\begin{abstract}
We present an algorithm for approximating a function defined over a $d$-dimensional manifold utilizing only noisy function values at locations sampled from the manifold with noise. 
To produce the approximation we do not require any knowledge regarding the manifold other than its dimension $d$. 
We use the Manifold Moving Least-Squares approach of \cite{Sober2019MMLS} to reconstruct the atlas of charts and the approximation is built on-top of those charts.
The resulting approximant is shown to be a function defined over a neighborhood of a manifold, approximating the originally sampled manifold. 
In other words, given a new point, located near the manifold, the approximation can be evaluated directly on that point.
We prove that our construction yields a smooth function, and in case of noiseless samples the approximation order is $\mathcal{O}(h^{m+1})$, where $h$ is a local density of sample parameter (i.e., the fill distance) and $m$ is the degree of a local polynomial approximation, used in our algorithm. 
In addition, the proposed algorithm has linear time complexity with respect to the ambient-space's dimension. Thus, we are able to avoid the computational complexity, commonly encountered in high dimensional approximations, without having to perform non-linear dimension reduction, which inevitably introduces distortions to the geometry of the data. 
Additionaly, we show numerical experiments that the proposed approach compares favorably to statistical approaches for regression over manifolds and show its potential.

\end{abstract}

\noindent\textbf{keywords:} Manifold learning, Regression over manifolds, Moving Least-Squares, Dimension reduction, High dimensional approximation, Out-of-sample extension

\noindent\textbf{MSC classification:} 65D99  \\
(Numerical analysis - Numerical approximation and computational geometry)

\section{Introduction}
Approximating a function defined over an extremely large dimensional space from scattered data is a very challenging task.
First, from the sample-set perspective, to achieve a constant sampling resolution, the number of points grows exponentially with respect to the number of dimensions. 
For example,  a uniform grid on $[0,1]^n$ with resolution of $0.1$ requires $10^n$ samples.    
Second, the high dimensionality of the domain introduces serious computational issues.
Thus, the performance of both parametric and non-parametric approximations (or regressions) deteriorates sharply as the dimension increases  \cite{ bellman1957dynamic, donoho2000highDimensions, hughes1968mean}.
These types of problems, sometimes referred to as \textit{the curse of dimensionality}, occur frequently in many scientific disciplines since data, originating from various sources and of various types, is becoming more and more available.

In the last three decades, there has been a rapid development of mathematical frameworks aiming to deal with complexity challenges, originating from high dimensional data. 
In many works, there exists an underlying assumption that the high dimensional domain of a sample set (i.e., point cloud) has a lower intrinsic dimension (e.g.,  \cite{aizenbud2015OutOfSample,belkin2003laplacian, coifman2006diffusion, jolliffe2002PCA, jones2008manifold, kohonen2001SOMbook, roweis2000LLE, saul2003LLE, tenenbaum2000isomap}).
In other words, the data points $\lbrace r_i \rbrace_{i=1}^N\subset \RR^n$ are samples of a lower dimensional manifold $\MM^d$, where $d$ is the intrinsic dimension of $\MM$ and $d\ll n$. 
Therefore, a natural way of reducing the effective number of parameters (in case of a parametric estimation) as well as computation complexity, would be to harvest this geometric relationship among the points.
The framework of dimension reduction proposes to embed the data into a lower dimension Euclidean domain while maintaining some sort of local distances (for a survey see \cite{lee2007nonlinear}).
Then, the lower dimensional representations can be used to perform function approximation over the data.
However, such methods inherently introduce distortion to the input data, as, for example, the curvature information is lost after performing such an embedding.
In addition, performing out-of-sample extensions, in most of these methods, will require the re-computation of the embedding.
Another effective framework, dealing with such problems is the Support Vector Machine based methods \cite{smola2004SVR, scholkopf1998KPCA}, which in some sense are another way of performing non-linear dimension reduction prior to performing regression.

A somewhat different approach, designed to deal with a more general definition of low dimensionality, is the Geometric Multi-Resolution Analysis (GMRA), introduced in a series of papers \cite{ allard2012multi, chen2013multi,little2017multiscale, maggioni2015geometric}.
The GMRA uses a local affine representation of the data, in order to store the data in a multi-resolution dictionary.
Thus, it does not project the data onto a lower dimensional Euclidean domain, but creates a tree-like representation of the original data based upon partitioning and performing local Singular Value Decomposition.
This approach, leads to a faithful, locally sparse,  representation of the input data in case the original tree was built from clean samples.
Subsequently, these representations can be used to approximate functions over the original input data (e.g., \cite{wang2016high}).
However, this approach does not aim at yielding smooth or even continuous approximations.

In the statistical literature which deals with high dimensional regression, several methods have shown to converge, while avoiding the curse of dimensionality, through utilizing the manifold assumption (e.g., see \cite{binev2005universal,binev2007universal,NIPS2011_4455,NIPS2013_5103}).
A statistical estimation approach which is more closely related to our work is presented in \cite{bickel2007local} where a local pull-back to a coordinate chart is assumed and then a local polynomial regression is being performed. 
Under these theoretical conditions an analysis of MSE extending the classical results of local polynomial regression \cite{ruppert1994multivariate} are given.
Later, \cite{cheng2013local} uses a local PCA procedure to obtain the local pull-back, and as well gives an MSE analysis.
A somewhat different approach utilizing a Tikhonov type regularization is presented in \cite{aswani2011regression}.
However, the design and analysis of all the aforementioned assume that the sampled domain is given without noise (i.e., the noise model applies only to the target of the function).
In our algorithmic design, there is an account for noisy domain.
Furthermore, although our theoretic analysis is described in the clean domain, the convergence and smoothness results below extend naturally to the noisy case if the noise in the domain decays to zero as the sample size tends to infinity (as explained in Section \ref{sec:PreliminariesM-MLS}, in such a case our pull-back is guaranteed to converge to the theoretical tangent in a similar manner to a local PCA and thus the analysis presented in \cite{cheng2013local} applies).
In Section \ref{sec:experimental} we show that our algorithm compares favorably to both \cite{cheng2013local,aswani2011regression}, re-conducting an experiment that was performed in \cite{cheng2013local}.

In this work, we take an approximation theoretic approach to analysis, supposing a deterministic rather than probabilistic sampling, and use a Moving Least-Squares (MLS) based framework to perform the approximation. 
The MLS approximation was originally designed for the purpose of smoothing and interpolating scattered data, sampled from some multivariate function \cite{lancaster1981surfaces, levin1998MLSapproximation,mclain1974drawing, nealen2004short}. Then, it evolved to deal with surfaces (i.e., $n-1$ dimensional manifolds in $\RR^n$), which can be viewed as a function locally rather than globally \cite{alexa2003mesh.cont, levin2004mesh}. 
This has been generalized lately in \cite{Sober2019MMLS} to the Manifold - Moving Least-Squares (Manifold-MLS), which deals with manifolds of an arbitrary dimension $d$ embedded in $\RR^n$. 
This Manifold-MLS framework, which will be described formally in Section \ref{sec:PreliminariesM-MLS}, harvests an implicit construction of the manifold's atlas of charts. 
Explicitly, for each point $p\in\MM$ a local coordinate chart (mapping a neighborhood of $p$ into a Euclidean $d$-dimensional linear space) is constructed.
Thus, the data is not being projected into a lower dimension Euclidean domain nor is it being compressed.

The main contribution of the current paper is providing a smooth approximation of high approximation order for a function defined over a manifold, based upon discretely sampled data.
The algorithm's design accounts for noise in both the domain as well as in the target of the function; i.e., we do not assume that the input lies exactly on a manifold but rather in a neighborhood of one.
Furthermore, it is guaranteed that the approximant is indeed a function defined over an implicit smooth manifold close to the originally sampled manifold in the Hausdorff norm sense.
Since we approximate the function through the Manifold-MLS' atlas of charts, on a local level the approximation is defined from $\RR^d$ to $\RR$.
Thus, our approximation framework avoids the curse of dimensionality without having to globally project the sample set into a lower dimensional Euclidean space.
We show in Theorem \ref{thm:smoothM-MLSFunction} that our theoretical approximant is a smooth function defined on a \textit{neighborhood} of the manifold domain. 
In addition, in Theorem \ref{thm:approximationM-MLSFunction}, we show that in case of clean samples the approximation yields an $\mathcal{O}(h^{m+1})$ approximation order, where $h$ is the fill distance with respect to the manifold domain and $m$ is the local polynomial degree. 
Since our approximant is defined on a neighborhood of the manifold, the theoretical results are still valid even in case noisy input, if the training set was clean.
Our algorithmic approach has linear complexity with respect to the ambient space's dimension $n$, which makes the proposed method realizable in cases where $n$ is extremely large.
Furthermore, performing out-of-sample-extension with this framework is trivial and does not require any further computations.

The rest of the paper is organized as follows: in Section \ref{sec:PreliminariesM-MLS} we describe the MLS approximation framework; in Section \ref{sec:M-MLSFunction} we describe the proposed approach of function approximation over manifolds; and in Section \ref{sec:experimental} we give some numerical examples showing the potential of the proposed method as well as empirical proof of the approximation order.

\section{Preliminaries -- the Manifold-MLS framework } 
\label{sec:PreliminariesM-MLS}
\subsection{MLS For Function Approximation}\label{sec:MLSFunctionApprxoimation}
The moving least-squares for function approximation was first presented by Mclain in \cite{mclain1974drawing} in order to approximate a function from noisy samples. Let $\lbrace x_i \rbrace_{i=1}^{N}$ be a set of distinct scattered points in $\mathbb{R}^d$ and let $\lbrace f(x_i) \rbrace_{i=1}^{N}$ be the corresponding sampled values of some function $f:\mathbb{R}^d \rightarrow \mathbb{R}$. Then, the $m^{th}$ degree moving least-squares approximation to $f$ at a point $ x \in \mathbb{R}^d $ is defined as $p_x(x)$, where
\begin{equation}
\label{eq:basicMLS}
p_x = \argmin_{p \in \Pi_m^d} \sum_{i=1}^{N} (p(x_i) - f(x_i))^2 \theta(\| x - x_i\|)
,\end{equation}
$ \theta(t) $ is a non-negative weight function rapidly decreasing as $t \rightarrow \infty$ (e.g. a Gaussian, or an indicator function on an interval around zero), $\| \cdot \|$ is the Euclidean norm and $\Pi_m^d$ is the space of polynomials of total degree $m$ in $\mathbb{R}^d$. 
Then, the MLS approximation is defined as,
\begin{equation}
    \widetilde f(x) \defeq p_x(x)
.\end{equation}
Notice, that if $\theta(t)$ is of finite support then the approximation is made local, and if $\lim_{t\rightarrow 0}\theta(t)=\infty$ the MLS approximation interpolates the data.

We wish to quote here two previous results regarding the resulting approximation presented in \cite{levin1998MLSapproximation}. In Section \ref{sec:M-MLSFunction} we will prove properties extending these theorems to the general case of approximation of functions over a $d$-dimensional manifold embedded in $\RR^n$.
\begin{Theorem}
Let $\theta(t) \in C^\infty$ and let the distribution of the data points $\lbrace x_i \rbrace_{i=1}^{N}$ be such that the problem is well conditioned (i.e., the least-squares matrix of \eqref{eq:basicMLS} is invertible). Then the MLS approximation is a $C^\infty$ function.
\label{thm:SmoothMLSfunctions}\end{Theorem}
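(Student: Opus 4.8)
The plan is to make the dependence of the minimizing polynomial $p_x$ on the evaluation point $x$ fully explicit, and then to differentiate. First I would fix a basis $\{b_1,\dots,b_K\}$ of $\Pi_m^d$ (with $K=\binom{m+d}{d}$) and write any candidate polynomial as $p(y)=\sum_{j=1}^K c_j b_j(y)=\mathbf{c}^{\mathsf T}\mathbf{b}(y)$, where $\mathbf{b}(y)=(b_1(y),\dots,b_K(y))^{\mathsf T}$. Substituting into \eqref{eq:basicMLS} turns the problem into an ordinary weighted least-squares problem in the coefficient vector $\mathbf{c}\in\RR^K$, whose minimizer is characterized by the normal equations $A(x)\mathbf{c}=\mathbf{v}(x)$, with
\begin{equation}
A(x)=\sum_{i=1}^N \theta(\|x-x_i\|)\,\mathbf{b}(x_i)\mathbf{b}(x_i)^{\mathsf T},\qquad \mathbf{v}(x)=\sum_{i=1}^N \theta(\|x-x_i\|)\,f(x_i)\,\mathbf{b}(x_i).
\end{equation}
The well-conditioning hypothesis is precisely the statement that $A(x)$ is invertible, so the coefficient vector is $\mathbf{c}(x)=A(x)^{-1}\mathbf{v}(x)$ and the approximation is $\widetilde f(x)=\mathbf{c}(x)^{\mathsf T}\mathbf{b}(x)$. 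Showing $\widetilde f\in C^\infty$ thus reduces to showing $x\mapsto\mathbf{c}(x)$ is smooth.

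Second, I would establish that the matrix- and vector-valued maps $x\mapsto A(x)$ and $x\mapsto\mathbf{v}(x)$ are $C^\infty$. Each of their entries is a finite linear combination, with the \emph{constant} coefficients $b_j(x_i)b_k(x_i)$ (respectively $f(x_i)b_j(x_i)$), of the weights $x\mapsto\theta(\|x-x_i\|)$; since $\theta\in C^\infty$ and we treat $\theta(\|\cdot\|)$ as a smooth function of $x$, each such entry is smooth.

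Third, smoothness of $\mathbf{c}(x)$ follows from smoothness of matrix inversion on the invertible locus. By the adjugate (Cramer) formula, $A(x)^{-1}=\frac{1}{\det A(x)}\,\mathrm{adj}(A(x))$, whose entries are rational functions of the entries of $A(x)$ with nonvanishing denominator $\det A(x)$. Hence $A(x)^{-1}$ is $C^\infty$ wherever $A(x)$ is invertible, and so is $\mathbf{c}(x)=A(x)^{-1}\mathbf{v}(x)$. Finally $\widetilde f(x)=\mathbf{c}(x)^{\mathsf T}\mathbf{b}(x)$ is a product of smooth functions, since the entries of $\mathbf{b}(x)$ are polynomials; therefore $\widetilde f\in C^\infty$.

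The step I expect to be the main (and essentially the only) obstacle is the smoothness of the weights $x\mapsto\theta(\|x-x_i\|)$ at the sample points themselves: the Euclidean norm is not differentiable at the origin, so smoothness of the composite at $x=x_i$ requires $\theta$ to have vanishing odd derivatives to infinite order at $0$ (equivalently, that $\theta$ be a smooth function of the squared distance $\|x-x_i\|^2$), which holds for standard choices such as the Gaussian. I would make this regularity of $\theta$ explicit, after which the remainder of the argument is routine bookkeeping with finite sums, products, and the inversion formula.
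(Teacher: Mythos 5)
Your proposal is correct and follows essentially the same route as the paper, which quotes this theorem from \cite{levin1998MLSapproximation} and reuses the technique in the proof of Theorem \ref{thm:smoothM-MLSFunction}: write the minimizer in closed form as a product of smooth matrices --- your primal normal-equations formula $\mathbf{c}(x)=A(x)^{-1}\mathbf{v}(x)$ is the same expression as the paper's Backus--Gilbert/Lagrange form $\bar a = D^{-1}E\left(E^TD^{-1}E\right)^{-1}\bar c$ --- and conclude by smoothness of matrix inversion on the invertible locus. Your closing caveat, that $x\mapsto\theta(\norm{x-x_i})$ is smooth at $x=x_i$ only if $\theta$ is (locally) a smooth function of the squared distance, is a genuine subtlety that the paper leaves implicit rather than a gap in your argument.
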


The second result, dealing with the approximation order with respect to the norm
\[
\norm{\widetilde{f}(x) - f(x)}_{\Omega , \infty} \defeq \max_{x\in \Omega}\abs{\widetilde{f}(x) - f(x)}
,\] 
necessitates the introduction of the following definition:
\begin{definition}
\textbf{$\hrho$ sets of fill distance $h$, density $\leq \rho$, and separation $\geq \delta$.}
Let $\Omega$ be a $d$-dimensional domain in $\RR^n$, and consider sets of data points in $\Omega$. We say that the set $X = \lbrace x_i \rbrace_{i=1}^I$ is an  $\hrho$ set if:
\begin{enumerate}
\item $h$ is the fill distance with respect to the domain $\Omega$
\begin{equation}
h = \sup_{x\in\Omega} \min_{x_i \in X} \norm{x - x_i}
\label{def:h}.
\end{equation}

\item 
\begin{equation}
\#\left\lbrace X \cap \overline{B}_{kh}(y)  \right\rbrace \leq \rho \cdot k^d, ~~ k\geq 1, ~~ y \in \RR^n.
\label{def:rho}\end{equation}
Here $\# Y$ denotes the number of elements in a given set $Y$, while $\overline{B}_r(x)$ is the closed ball of radius $r$ around $x$.
\item $\exists \delta>0$ such that
\begin{equation}
\norm{x_i - x_j} \geq h \delta, ~~ 1 \leq i < j \leq I.
\label{def:delta}
\end{equation}
\end{enumerate}
\label{def:h-rho-delta}\end{definition}

\begin{remark}
Note, that in \cite{levin1998MLSapproximation}, the fill distance $h$ was defined slightly differently. However, the two definitions are equivalent.
\end{remark}

\begin{Theorem}
Let $f$ be a function in $C^{m+1}(\Omega)$ with an $h$-$\rho$-$\delta$ sample set. Then for fixed $\rho$ and $\delta$, there exists a fixed $k>0$, independent of $h$, such that the approximant given by equation \eqref{eq:basicMLS} is well conditioned for $\theta$ with a finite support of size $s =k h$. In addition, the approximant yields the following error bound:
\begin{equation}
\norm{\widetilde{f}(x) - f(x)}_{\Omega , \infty} <  M \cdot h^{m+1}
\end{equation}
for some $M$ independent of $h$ .
\label{thm:OrderMLSfunctions}
\end{Theorem}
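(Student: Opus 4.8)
The plan is to reduce the claim to two facts: the polynomial reproduction property of the MLS operator, and a uniform bound on the $\ell_1$ norm of its associated cardinal (shape) functions. First I would observe that for a fixed evaluation point $x$ the minimization \eqref{eq:basicMLS} is a weighted linear least-squares problem, so its solution depends linearly on the sampled values; writing $\widetilde{f}(x)=\sum_{i} a_i(x)\, f(x_i)$ makes the coefficients $a_i(x)$ the central objects. Because any $q\in\Pi_m^d$ is reproduced exactly by \eqref{eq:basicMLS} (the minimizer of a nonnegative quadratic that attains the value zero is $q$ itself, under the well-conditioning hypothesis), these coefficients satisfy $\sum_i a_i(x)\, q(x_i)=q(x)$ for every $q\in\Pi_m^d$.

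Next I would insert a local Taylor expansion. Since $f\in C^{m+1}(\Omega)$, write $f = T_x + R_x$, where $T_x\in\Pi_m^d$ is the degree-$m$ Taylor polynomial of $f$ about $x$ and the remainder obeys $\abs{R_x(y)}\le C\,\norm{y-x}^{m+1}$ with $C$ governed by the $C^{m+1}$ norm of $f$ on $\Omega$. Subtracting and using polynomial reproduction to cancel the Taylor part gives
\[
\widetilde{f}(x)-f(x)=\sum_i a_i(x)\,R_x(x_i).
\]
Because $\theta$ has support of radius $s=kh$, only points with $\norm{x_i-x}\le kh$ carry nonzero weight, so each surviving remainder is $O\!\left((kh)^{m+1}\right)=O(h^{m+1})$, whence
\[
\abs{\widetilde{f}(x)-f(x)}\le C\,(kh)^{m+1}\!\!\sum_{i:\,\norm{x_i-x}\le kh}\!\!\abs{a_i(x)}.
\]
Everything thus reduces to bounding $\sum_i \abs{a_i(x)}$ by a constant independent of $h$ and of $x\in\Omega$.

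The heart of the argument, and the step I expect to be the main obstacle, is exactly this uniform $\ell_1$ bound on the shape functions, which is equivalent to controlling the condition number of the least-squares system as $h\to 0$. I would handle it by a rescaling that renders the $\hrho$ conditions of Definition \ref{def:h-rho-delta} scale invariant: substituting $y_i=(x_i-x)/h$, the local configuration acquires fill distance of order one, the density bound \eqref{def:rho} is unchanged, and the separation \eqref{def:delta} becomes $\norm{y_i-y_j}\ge\delta$. In these normalized coordinates the weighted Gram matrix $G=\sum_i \theta(\norm{y_i})\,v(y_i)\,v(y_i)^{\!\top}$, with $v$ the vector of monomials up to degree $m$, is assembled from finitely many points lying in a ball of radius $k$. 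Choosing $k$ large enough, depending only on $\rho,\delta,m,d$, guarantees that the configuration is $\Pi_m^d$-unisolvent and that $G$ is uniformly positive definite: a lower bound on its smallest eigenvalue follows from the separation $\delta$ (points cannot collapse) together with the existence of enough well-spread points inside the support, while the density $\rho$ furnishes a matching upper bound on $\norm{G}$. Since $a_i(x)$ is $\theta(\norm{y_i})\,v(0)^{\!\top}G^{-1}v(y_i)$, these two-sided bounds, being uniform in $h$ and $x$, yield $\sum_i\abs{a_i(x)}\le M'$ with $M'$ independent of $h$; combined with the previous display this gives the asserted $O(h^{m+1})$ error, and the uniform invertibility of $G$ for $k$ sufficiently large is precisely the well-conditioning claim. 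The delicate point throughout is verifying that a \emph{single} $k$, fixed once $\rho$ and $\delta$ are fixed, simultaneously secures unisolvency and the uniform eigenvalue bounds, which is where the quantitative use of the density and separation hypotheses is unavoidable.
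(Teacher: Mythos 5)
Your proposal is correct and follows essentially the same route as the source from which the paper quotes this theorem (and as the paper's own proof of Theorem \ref{thm:approximationM-MLSFunction}): write $\widetilde f(x)=\sum_i a_i(x)f(x_i)$, use degree-$m$ polynomial reproduction together with a Taylor remainder to reduce the error to $O(h^{m+1})\cdot\sum_i\abs{a_i(x)}$, and bound $\sum_i\abs{a_i(x)}$ uniformly in $h$ and $x$ via rescaling and the $\hrho$ conditions. The paper itself states Theorem \ref{thm:OrderMLSfunctions} without proof, but its later argument invokes exactly the equivalent bound $\bigl(1+\sum_i\abs{a_i^r}\bigr)E_{B_{ch}(0),\Pi_m^d}$ plus the Taylor remainder and the scale invariance of the coefficients, so your decomposition is the same argument in a slightly different packaging.
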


\begin{remark}\label{rem:NonInterpolatoryTheta}
Although both Theorem \ref{thm:SmoothMLSfunctions} and Theorem \ref{thm:OrderMLSfunctions} are stated in \cite{levin1998MLSapproximation} assuming an interpolatory conditions (i.e., the weight function satisfies $\theta(0) = \infty$), the proofs articulated there are still valid taking any compactly supported non-interpolatory weight function. 
These proofs are based upon a representation of the solution to the minimization problem \eqref{eq:basicMLS} through a multiplication of smooth matrices. 
These matrices remain smooth even when the interpolatory condition is dropped (see the proofs of Theorem \ref{thm:smoothM-MLSFunction} and Proposition \ref{prop:interpolation}, which uses a similar proof technique).
\end{remark}

\begin{remark}
Notice that the weight function $\theta$ in the definition of the MLS for function approximation is applied on the distances in the domain. In what follows, we will apply $\theta$ on the distances between points in $\RR^n$ as we aim at approximating functions over manifold domains and as such each local coordinate chart should be affected by neighboring points in $\RR^n$ (see Fig. \ref{fig:ApproximationOrderComment}). In order for us to be able to use Theorems \ref{thm:SmoothMLSfunctions} and \ref{thm:OrderMLSfunctions}, the distance in the weight function of equation \eqref{eq:basicMLS} should be $\theta(\norm{(x , 0) - (x_i, f(x_i))})$ instead of $\theta(\norm{x - x_i})$ (see Figure \ref{fig:ApproximationOrderComment}). Nevertheless, as stated above, the proofs of both theorems as presented in \cite{levin1998MLSapproximation} rely on the representation of the solution to the minimization problem as a multiplication of smooth matrices. These matrices will still remain smooth after replacing the weight, as the new weighting is still smooth. 
\end{remark}

\begin{remark}
The approximation order remains the same even if the weight function is not compactly supported in case the weight function decays fast enough (e.g., by taking $\theta(t) \defeq e^{-\frac{t^2}{h^2}}$).
\end{remark} 
\begin{figure}[h]
\begin{centering}
\includegraphics[width={0.4\linewidth}]{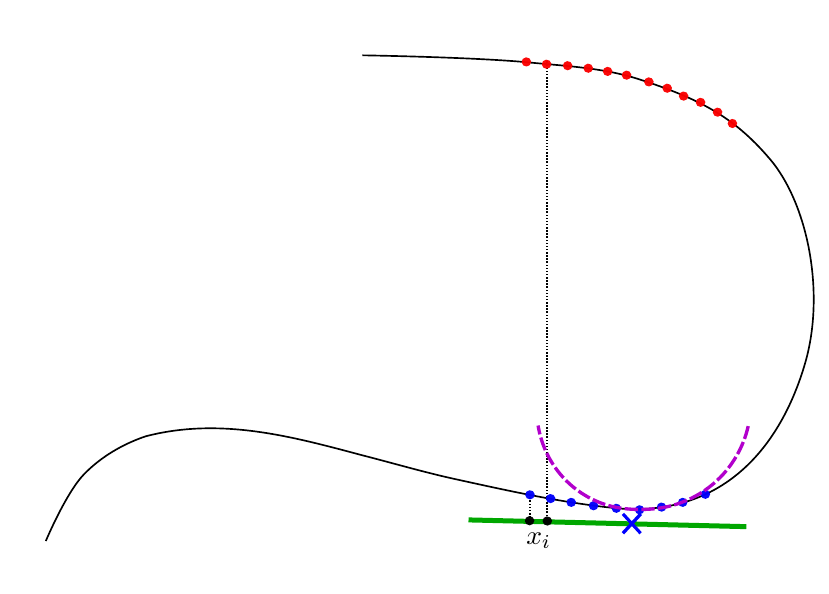}

\par\end{centering}

\caption{The effect of remote points when taking $\theta(\norm{(x , 0) - (x_i, f(x_i))})$ instead of $\theta(\norm{x - x_i})$. Assuming that the green line represents a given coordinate system around the point $ x $ (marked by the blue $ \times $), by taking the weights  $\theta(\norm{x - x_i})$ the contribution of both the red and blue samples to the weighted cost function would be $\mathcal{O}(h^{m+1})$. Alternatively, by taking $\theta(\norm{(x , 0) - (x_i, f(x_i))})$  with a fast decaying weight function the contribution of the red points would be negligible. Thus, the approximation (in purple) would fit the behavior of the blue points alone. 
}
\label{fig:ApproximationOrderComment}
\end{figure}

\subsection{The Manifold-MLS Projection}
\label{sec:M-MLSProjection}
We now turn to the Manifold-MLS projection procedure, introduced in \cite{Sober2019MMLS}, upon which we base the results of the current paper. 
Let $\MM$ be a manifold of dimension $d$ lying in $\RR^n$, and let the samples of $\MM$ hold the following conditions.
\subsubsection*{Noisy Sampling Assumptions}
\label{sec:NoisySampling}
\begin{enumerate}
    \item $\MM\in C^2$ is a closed (i.e., compact and boundaryless) submanifold of $\RR^n$.
    \item $\tilde R = \{\tilde r_i\}_{i=1}^I\subset\MM$ is an $\hrho$ sample set with respect to the domain $\MM$ (see Definition \ref{def:h-rho-delta}).
    \item $R = \{ r_i \}_{i=1}^I$ are noisy samples of $\MM$; i.e., $ r_i= \tilde r_i + n_i$.
    \item $\norm{n_i} < \sigma$
\end{enumerate}

Given a point $r$ near $\MM$ the Manifold Moving Least-Squares (Manifold-MLS) projection of $r$ is defined through two sequential steps: 
\begin{itemize}
    \item[1.] Find a local $d$-dimensional affine space $H(r)$ around an origin $q(r)$ such that $H$ approximates the sampled points.
    Explicitly, $H = q + Span\{e_k\}_{k=1}^d$, where $\{e_k\}_{k=1}^d$ is some orthonormal basis of $\RR^d$. 
    $H$ will be used as a local coordinate system.
    \item[2.] Similar to the function approximation described in Equation \eqref{eq:basicMLS}, define the projection of $r$ using a local polynomial approximation $p:H \rightarrow \mathbb{R}^{n}$ of $\mathcal{M}$ over the new coordinate system. Explicitly, we denote by $x_i$ the projections of $r_i$ onto $H$ and then define the samples of a function $f$ by $f(x_i) = r_i$. Accordingly, the $d$-dimensional polynomial $p$ is an approximation of the vector valued function $f$. 
\end{itemize}

\begin{remark}
Since $\MM$ is a differentiable manifold it can be viewed locally as a graph of a function from the tangent space $T_p\MM\simeq \RR^d$ to $(T_p\MM)^\perp\simeq \RR^{n-d}$. 
Thus, we are looking for a coordinate system $H$ and refer to the manifold $\MM$ locally as a graph of some function $f:H\rightarrow \RR^{n-d}$.
\end{remark}
\begin{remark}\label{rem:grassPush}
Throughout the paper, whenever we encounter an affine space
$$L = x + span\{e_k\}_{k=1}^d,$$ 
we will denote its homogeneous part, which belongs to the Grassmannian (i.e., the linear space without the shift by $x$) as 
\[
\GG L = span\{e_k\}_{k=1}^d
.\]
\end{remark}

In this paper, we intend to use the first step of the Manifold-MLS to provide an atlas of charts for the manifold. 
This atlas would serve as the basis of our construction of function approximation as presented in Section \ref{sec:MLSFunctionApprxoimation}.
Therefore, we wish to present here formally just the first step in the Manifold-MLS, and quote some results that will be useful to our analysis.

\noindent\textbf{Step 1 - Formal Description} \\
Let 
\begin{equation}
   J(r; q, H) = \sum_{i=1}^{I} d(r_i , H)^2 \theta(\| r_i - q\|) 
\label{eq:Jdef}\end{equation}
be a cost function.
We wish to Find a $d$-dimensional affine space $H(r)$, and a point $q(r)$ on $H(r)$, such that 
\begin{equation}
   q(r),H(r) = \argmin_{q,H} J(r; q, H) 
\label{eq:Step1Minimization}\end{equation}
under the constraints
\begin{enumerate}
\item $r-q \perp H$  \label{init_constraint:perp}
\item $q\in B_{\mu}(r)$ \label{init_constraint:search}
\item $\#\left(R\cap B_{\sigma+ h}(q)\right) \neq 0$ \label{init_constraint:proximity}
,\end{enumerate}
where $d(r_i , H)$ is the Euclidean distance between the point $r_i$ and the affine subspace $H$, $B_\eta(x)$ is an open ball of radius $\eta$ around $x$, $h$ is the fill distance from the $\hrho$ set in the sampling assumptions.

We wish to give some motivation to the definition of the minimization problem portrayed above.
Constraint \ref{init_constraint:search} limits the search space to a neighboring part of the manifold, whereas constraint \ref{init_constraint:proximity}, narrows it further to the vicinity of the samples, and, thus, voids the possibility of achieving solutions with zero value of $J$ (caused by the fact that there are no sample  point in the support of $\theta$); see Figure \ref{fig:uniqueCircle}.
The necessity of constraint \ref{init_constraint:perp} is less obvious though. 
First, minimizing $J(r; q, H)$ without this constraint will just yield a local PCA approximation around an unknown point $q$. 
Second, the added constraint links the approximation to the point $r$, which we aim to project onto $\MM$, as well as generalizes the idea of the Euclidean projection onto a manifold.
Explicitly, if we have a point $r$ ``close enough" to a given manifold $\MM$ there exists a unique projection $P(r)$ of the point $r$ onto $\MM$.
In addition, we know that this projection maintains $r-P(r)\perp T_{P(r)}\MM$, which is echoed in constraint \ref{init_constraint:perp}.
This concept of a unique projection domain is better expressed by the definition of reach as introduced in \cite{federer1959curvature} .
\begin{definition}[Reach]
The reach of a subset $A$ of $\RR^n$, is the largest $\tau$ (possibly $\infty$) such that for any $x\in\RR^n$ that maintains $dist(A,x)\leq \tau$, there exists a unique point $P_A(x)\in A$, nearest to $x$. We denote $rch(A) =\tau$.
\end{definition}
Following this definition let 
\begin{equation} \label{eq:ReachNeihborhood}
     U_{reach} \defeq \{x\in\RR^n ~\vert~ dist(x, \MM) < rch(\MM)\}
.\end{equation}
In our context, we refer to manifolds with positive reach.
Accordingly, for a point $r\in U_{reach}$, there exists a unique projection $P(r)$ onto the manifold $\MM$.
As shown in \cite{Sober2019MMLS}, the minimizers $q(r), H(r)$ of Equation \eqref{eq:Step1Minimization} converge to $P(r), T_{P(r)}\MM$ respectively as the fill distance $h$ tends to zero (given some assumptions on the support of $\theta$) for $r$ in some fixed size neighborhood $U\subset U_{reach}$.

Therefore, in order to generalize the concept of a reach neighborhood (relevant for the limit case) to a domain where the procedure yields a unique approximation, we assume the existence of a \textit{Uniqueness Domain}.

\begin{assumption}[Uniqueness Domain]
We assume that there exists an $\epsilon$-neighborhood of the manifold
\begin{equation}
    U_{unique} \defeq \{x\in\RR^n ~\vert~ dist(x, \MM) < \epsilon < rch(\MM) \}
,\label{eq:unique_def}\end{equation} 
such that for any $r \in U_{unique}$ the minimization problem \eqref{eq:Step1Minimization} has a unique local minimum $q(r) \in B_{\mu}(r)$, for some constant $\mu < rch(\MM)/2$.
\label{eq:uniqueAssume}
\end{assumption}

In Lemma 4.4 of \cite{Sober2019MMLS}, it is shown that in the limit case, where $h\rightarrow 0$, there exists a uniqueness domain as described above for all closed manifolds.
Note that in order to achieve a unique solution for a given $r$ the decay of $\theta$ should be bounded from below, and $\mu$ should be large enough such that $P(r)\in B_\mu(r)$.
Figure \ref{fig:uniqueCircle} illustrates a section of the reach neighborhood of a circle restricting $r$ such that $d(r,\MM)<rch(\MM)/4$ and setting $\mu=rch(\MM)/2$.
To some extent, the circle example ``bounds" the behaviour of the data in every 2D section of the manifold, as the reach bounds the sectional curvature of the manifold.
An illustration of a uniqueness domain for a cleanly sampled curve embedded in $\RR^3$ can be seen in Figure \ref{fig:UniqueDomain}.

\begin{figure}[ht]
\begin{centering}
\includegraphics[width={0.6\linewidth}]{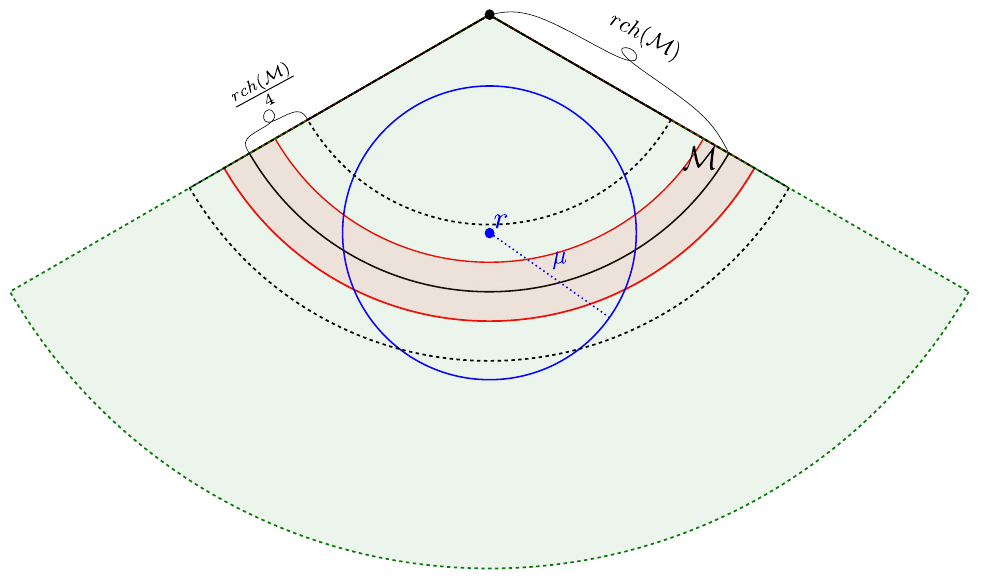}
\par\end{centering}
\caption{An illustration of a section of a uniqueness domain of a circle where we take $r$ such that $d(r,\MM)< rch(\MM)/4$ and set $\mu=rch(\MM)/2$. The black dot above is the center of the circle; the green region is the reach neighborhood of $\MM$; the red region is the noisy region from which we sample the manifold (i.e., the support of the distribution of sample points); the blue ball is the search region defined in constraint \ref{init_constraint:search} of Equation \eqref{eq:Step1Minimization}.  \label{fig:uniqueCircle}}
\end{figure}

\begin{figure}[ht]
\begin{centering}
\includegraphics[width={1\linewidth}]{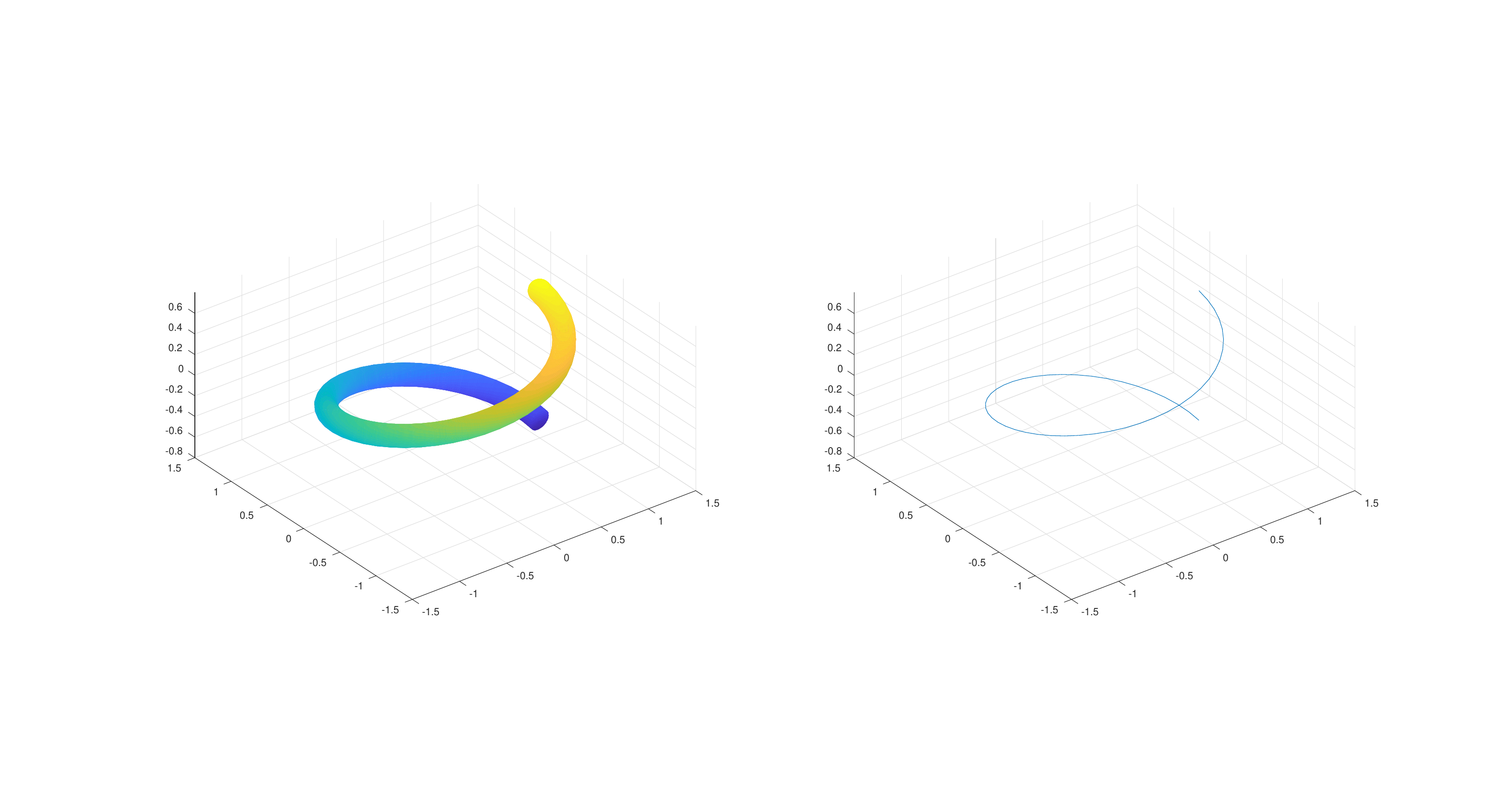}
\par\end{centering}
\caption{An illustration of a uniqueness domain. Right - a 1-dimensional manifold $\MM$ embedded in $\RR^3$. Left - a uniqueness domain $U$ of  $\MM$. \label{fig:UniqueDomain}}
\end{figure}

As shown in \cite{Sober2019MMLS}, if we fix $q$, the affine space minimizing \eqref{eq:Step1Minimization} is defined uniquely (see Lemma 4.5 there).
Explicitly, we denote it by $H'(r;q)$ and we can thus reformulate \eqref{eq:Step1Minimization} as
\begin{equation}
    J^\ast(r; q) = \sum_{i=1}^{I} d(r_i , H'(r; q))^2 \theta(\| r_i - q\|)
,\label{eq:JofQ}
\end{equation}
We would use this representation in one of the Theorems we show below.
Furthermore, we wish to quote here two results from \cite{Sober2019MMLS}, that will assist in the analysis of the function approximation in Section \ref{sec:M-MLSFunction}.

\begin{Lemma}[Projection property]
Let the Noisy Sampling Assumptions of Section \ref{sec:NoisySampling} hold. Let $r$ be in the uniqueness domain $U_{unique}$ of assumption \eqref{eq:uniqueAssume} and let $q(r)$ and $H(r)$ be the minimizers of $J(r;q,H)$ as defined above. Then for any point $\tilde{r} \in U_{unique}$ s.t. $\norm{\tilde{r} - q(r)}<\mu$ and $\tilde{r} - q(r) \perp H(r) $ we get $q(\tilde{r}) = q(r)$ and $H(\tilde{r}) \equiv H(r)$
\label{lem:ProjectionUniqueness}\end{Lemma}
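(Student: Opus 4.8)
The plan is to exploit the fact that the objective $J(r;q,H)$ of \eqref{eq:Jdef} does not actually depend on the base point $r$: the point $r$ enters the optimization problem \eqref{eq:Step1Minimization} only through constraints \ref{init_constraint:perp} and \ref{init_constraint:search}, while the objective itself and constraint \ref{init_constraint:proximity} are functions of $(q,H)$ alone. Writing $q_0 \defeq q(r)$ and $V_0 \defeq \GG H(r)$, the strategy is therefore to show that the minimizer $(q_0, H(r))$ of the $r$-problem is also a minimizer of the $\tilde r$-problem, and then to invoke the Uniqueness Assumption \eqref{eq:uniqueAssume} to identify it with $(q(\tilde r), H(\tilde r))$.

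First I would check feasibility, namely that $(q_0, H(r))$ satisfies the three constraints of the $\tilde r$-problem. Constraint \ref{init_constraint:perp} for $\tilde r$ reads $\tilde r - q_0 \perp H(r)$, which is exactly one of the hypotheses; constraint \ref{init_constraint:search} reads $q_0 \in B_\mu(\tilde r)$, i.e. $\norm{\tilde r - q_0} < \mu$, which is the other hypothesis; and constraint \ref{init_constraint:proximity} is independent of the base point and already holds for $(q_0, H(r))$. Hence $(q_0, H(r))$ is feasible for $\tilde r$, and since $J$ is base-point independent it attains there the value $J^\ast(r;q_0) = J(q_0,H(r))$.

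The main step is to promote feasibility to optimality, i.e. to show that $(q_0, H(r))$ is a local minimizer of the $\tilde r$-problem. This is the crux, because the feasible sets of the two problems are genuinely different — they are distinct slices of $(q,V)$-space cut out by the base-point-dependent constraint \ref{init_constraint:perp} — so equality of objective values at a common feasible point does not by itself suffice; one must rule out competitors that are feasible for $\tilde r$ but not for $r$. Here I would use the reformulation \eqref{eq:JofQ}: for fixed $q$, minimizing $J$ over affine spaces $H$ with $(s-q)\perp H$ is a weighted principal-component problem, whose solution $H'(s;q)$ is the span of the top $d$ eigenvectors of the weighted covariance $C(q)\defeq\sum_i \theta(\norm{r_i - q})(r_i - q)(r_i - q)^T$ restricted to the hyperplane $(s-q)^\perp$. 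By Lemma 4.5 of \cite{sober2016MMLS} this optimizer is unique, and $V_0 = \GG H'(r;q_0)$. Since by hypothesis $\tilde r - q_0 \in V_0^\perp$, the space $V_0$ is again admissible for the $\tilde r$-constrained problem at $q_0$; what remains is to show it stays optimal there, i.e. $H'(\tilde r; q_0) = H(r)$. The mechanism I would invoke is that the joint first-order stationarity of $(q_0, V_0)$ in both $q$ and $H$ decouples the normal-fibre direction from the dominant eigenspace, so that $V_0$ is in fact the unconstrained top-$d$ eigenspace of $C(q_0)$; removing any single direction $u\in V_0^\perp$ from the admissible set then leaves this top-$d$ eigenspace unchanged. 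Applying this with $u = \tilde r - q_0$ gives $H'(\tilde r; q_0) = H(r)$, whence $J^\ast(\tilde r; q_0) = J^\ast(r; q_0)$ and $q_0$ is stationary, hence a local minimizer, of $J^\ast(\tilde r; \cdot)$.

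Finally I would close with uniqueness. Since $\tilde r \in U_{unique}$, the Uniqueness Assumption \eqref{eq:uniqueAssume} guarantees that the $\tilde r$-problem has exactly one local minimizer $q(\tilde r)\in B_\mu(\tilde r)$. The point $q_0 = q(r)$ is a local minimizer of the $\tilde r$-problem lying in $B_\mu(\tilde r)$ by hypothesis, so uniqueness forces $q(\tilde r) = q_0 = q(r)$, and the associated optimal space is $H(\tilde r) = H'(\tilde r; q_0) = H(r)$, as claimed. I expect the decoupling fact in the third paragraph — that joint stationarity renders the optimal tangent space $V_0$ insensitive to which normal direction of the fibre serves as the base point — to be the real obstacle, and the place where the regime assumptions $dist(\cdot,\MM)<rch(\MM)$ and $\mu<rch(\MM)/2$ of \cite{sober2016MMLS} must be used to keep the configuration well-behaved.
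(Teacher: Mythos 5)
First, note that the paper itself does not prove this lemma: it is quoted from \cite{sober2016MMLS}, and the argument given there consists essentially of the two bookends of your proposal --- the cost $J$ of \eqref{eq:Jdef} and constraint \ref{init_constraint:proximity} are independent of the base point, the pair $(q(r),H(r))$ satisfies constraints \ref{init_constraint:perp} and \ref{init_constraint:search} of \eqref{eq:Step1Minimization} for the base point $\tilde r$ by hypothesis, and the Uniqueness Assumption \eqref{eq:uniqueAssume} then identifies it with $(q(\tilde r),H(\tilde r))$; the passage from ``feasible for the $\tilde r$-problem'' to ``local minimum of the $\tilde r$-problem'' is treated as immediate. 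You have correctly isolated that this passage is the only nontrivial content, so structurally your proposal matches the intended argument, and your diagnosis that the two feasible sets are different slices of $(q,H)$-space is exactly the right concern.

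However, your resolution of that step does not hold up. (i) The claim that joint first-order stationarity of $(q_0,V_0)$ forces $V_0$ to be the \emph{unconstrained} top-$d$ eigenspace of your weighted covariance $C(q_0)$ (equivalently, that $r-q_0$ lies in the bottom $(n-d)$-dimensional eigenspace, so that constraint \ref{init_constraint:perp} is inactive for the $H$-minimization) is asserted rather than proved, and it does not follow from the first-order conditions: stationarity of $J^\ast(r;\cdot)$ in $q$ constrains weighted sums involving the residuals $d(r_i,H_0)$ and $\theta'$, and says nothing about where $r-q_0$ sits relative to the spectrum of $C(q_0)$. At best such inactivity holds approximately as $h\rightarrow 0$, via the convergence $(q(r),H(r))\rightarrow (P(r),T_{P(r)}\MM)$, which is a quantitative statement you would have to establish separately, not a formal consequence of optimality. (ii) Even granting (i), the conclusion ``$q_0$ is stationary, hence a local minimizer, of $J^\ast(\tilde r;\cdot)$'' is a non sequitur: an envelope argument yields at most $\nabla_q J^\ast(\tilde r;q_0)=\nabla_q J^\ast(r;q_0)=0$, but the constraint slices $\lbrace (q,H): r-q\perp H\rbrace$ and $\lbrace (q,H): \tilde r-q\perp H\rbrace$ genuinely differ away from $q=q_0$, so second-order information does not transfer and $q_0$ could a priori be a saddle of $J^\ast(\tilde r;\cdot)$. (iii) Without local minimality, Assumption \eqref{eq:uniqueAssume}, which speaks only of local minima, cannot be invoked to conclude $q(\tilde r)=q_0$. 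So the crux you correctly identified remains open in your write-up; closing it honestly requires either a quantitative small-$h$ argument or reading the Uniqueness Assumption as asserting uniqueness simultaneously for all base points along the normal fibre $q_0+(\GG H_0)^\perp$.
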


Next, we need a notion of a smooth change for the coordinate system.
In other words, we wish to define a smooth change of affine spaces.
\begin{definition}
\label{def:SmoothH}
Let $H(r)$ be a parametric family of $d$-dimensional affine sub-spaces of $\RR^n$ centered at $q(r)$. Explicitly, 
\[
w = q(r) + \sum_{k=1}^d c_k e_k(r) ~~,~~ \forall w\in H(r)
,\]
where $\lbrace e_k(r)\rbrace_{k=1}^d$ is an orthonormal basis of the linear sub-space $\GG H(r)$.
We say that the family $(q(r), H(r))$ changes smoothly with respect to $r$ if for any vector $v\in\RR^n$ the function 
\[
w(r) = q(r) + \sum_{k=1}^d \langle v - q(r), e_k(r) \rangle e_k(r)
,\]
describing the Euclidean projections of $v$ onto $H(r)$, vary smoothly with respect to $r$. 
\end{definition}
\begin{remark}
    Definition \ref{def:SmoothH} can be pronounced as a smooth function from $\RR^n$ to $Gr_d(\RR^n)$, where $Gr_d(\RR^n)$ is the $d$-dimensional Grassmanian of $\RR^n$.
    However, we believe that using the explicit definition pronounced above is clearer.
\end{remark}

Accordingly, a \textit{smoothly varying coordinate system} would be a family of affine sub-spaces which vary smoothly with respect to our parameter $r$, such that our manifold can be viewed locally as a graph of a function over it. 

\begin{Theorem}[Smoothness of $q(r), H(r)$]
Let the Noisy Sampling Assumptions of Section \ref{sec:NoisySampling} hold. Let $\theta(x)\in C^{\infty}$, $H$ be a $d$-dimensional affine space around an origin $q$ and let $U_{unique}$ be the uniqueness domain of Assumption \ref{eq:uniqueAssume}. Let $q(r), H(r)$ be the minimizers of the constrained minimization problem \eqref{eq:Step1Minimization}
Let $J^\ast(r; q)$ be the function described in Equation \eqref{eq:JofQ}.
Then for all $r'\in U_{unique}$ such that $\left( \frac{\partial^2 J^{\ast}}{\partial q_i \partial q_j}\right)_{ij} \in M_{n \times n}$ is invertible at $(r',q(r'))$ we get:
\begin{enumerate}
\item $q(r)$ is a smooth ($C^{\infty}$) function in a neighborhood of $r'$.
\item The affine space $H(r)$ changes smoothly ($C^{\infty}$) in a neighborhood of $r'$.
\end{enumerate}
\label{thm:SmoothCoordinates}\end{Theorem}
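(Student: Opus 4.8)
The plan is to reduce the joint minimization over $(q,H)$ to a minimization over $q$ alone and then apply the Implicit Function Theorem to the first-order condition for $q$. By the reduction recalled in Equation \eqref{eq:JofQ}, for each fixed $q$ the optimal affine space $H'(r;q)$ is uniquely determined, so the minimizer $q(r)$ of the constrained problem \eqref{eq:Step1Minimization} is an interior minimizer of the reduced objective $q \mapsto J^\ast(r;q)$ on the open search region $B_\mu(r)$: the proximity constraint \ref{init_constraint:proximity} and the ball constraint \ref{init_constraint:search} are open, hence inactive at an interior minimizer, while the perpendicularity constraint \ref{init_constraint:perp} is absorbed into the definition of $H'(r;q)$ (this is precisely what makes $J^\ast$ depend nontrivially on $r$). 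Consequently $q(r)$ satisfies the stationarity equation
\[
G(r,q) \defeq \nabla_q J^\ast(r;q), \qquad G(r, q(r)) = 0 .
\]

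First I would establish that $G$ is a $C^\infty$ map of $(r,q)$ on a neighborhood of $\left(r', q(r')\right)$. Since $\theta \in C^\infty$, the only delicate ingredient is the smoothness of $H'(r;q)$, that is, of the orthogonal projection $\Pi(r,q)$ onto its homogeneous part $\GG H'(r;q)$, because the squared distances in $J^\ast$ can be written as $d(r_i,H'(r;q))^2 = \norm{(I-\Pi(r,q))(r_i - q)}^2$. The subspace $\GG H'(r;q)$ is a top-$d$ eigenspace of a weighted covariance matrix assembled smoothly from $q$ and $r$, so rather than tracking individual eigenvectors, which need not be smooth, I would work with the spectral projection $\Pi(r,q)$ onto that eigenspace. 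On the uniqueness domain the $d$-th and $(d{+}1)$-th eigenvalues stay separated, so by the standard perturbation theory of spectral projections $\Pi(r,q)$ is $C^\infty$; this is the matrix-smoothness mechanism already invoked in Remark \ref{rem:NonInterpolatoryTheta}. Differentiating the resulting smooth expression for $J^\ast$ then shows $G \in C^\infty$.

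With smoothness in hand the argument concludes by the Implicit Function Theorem. At $\left(r', q(r')\right)$ we have $G\left(r', q(r')\right)=0$, and the Jacobian $D_q G\left(r', q(r')\right) = \left(\frac{\partial^2 J^\ast}{\partial q_i \partial q_j}\right)_{ij}$ is invertible by hypothesis, so there is a unique $C^\infty$ map $r \mapsto q(r)$ near $r'$ solving $G(r,q(r))=0$. By the Uniqueness Domain Assumption \ref{eq:uniqueAssume} this locally defined solution must coincide with the genuine minimizer, which proves part (1). For part (2) I would write $H(r) = q(r) + \GG H'\left(r;q(r)\right)$, whose projection operator is $\Pi\left(r,q(r)\right)$; being a composition of the smooth maps $\Pi$ and $q(\cdot)$, it is $C^\infty$, and for every $v\in\RR^n$ the projection $q(r) + \Pi\left(r,q(r)\right)(v - q(r))$ varies smoothly in $r$, which is exactly smoothness of the coordinate family in the sense of Definition \ref{def:SmoothH}.

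The step I expect to be the main obstacle is the smoothness of the optimal subspace $H'(r;q)$: one must avoid reasoning with individual eigenvectors and instead secure a uniform spectral gap on the uniqueness domain so that the spectral projection $\Pi(r,q)$ is genuinely $C^\infty$, and one must check carefully that the perpendicularity constraint \ref{init_constraint:perp} is correctly absorbed into $H'$, so that the reduced minimization over $q$ is truly interior and the plain Hessian condition of the hypothesis suffices without additional Lagrange-multiplier terms.
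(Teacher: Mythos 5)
This theorem is quoted in the paper from \cite{sober2016MMLS} without an in-paper proof, so there is nothing internal to compare against; your proposal reconstructs essentially the strategy of the cited source, namely reducing to the stationarity condition $\nabla_q J^\ast(r;q)=0$ and applying the Implicit Function Theorem, whose nondegeneracy hypothesis is exactly the assumed invertibility of $\left(\partial^2 J^\ast/\partial q_i\partial q_j\right)_{ij}$, combined with smoothness of the spectral projection defining $H'(r;q)$. The only point deserving extra care is the smooth dependence of the constrained optimal subspace $\GG H'(r;q)$ near points where $r=q(r)$, since the constraint $r-q\perp H$ is realized through the projection onto $(r-q)^\perp$, which degenerates as $r\to q$; apart from making that case explicit, the argument is sound and matches the cited proof.
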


\begin{remark}\label{rem:hessian}
Note that $(r, q(r))$ is a local minimum of $J^*$ as $r$ belongs to the uniqueness domain. As a consequence, the function $J^*$ is locally convex at that point. Thus, the condition that the Hessian of $J^*$ at $(r, q(r))$ is invertible implies that at this minimum there is no direction in which the second derivative vanishes.
So, when the condition is not met, there should exist a sectional curve of $J^*$ that has vanishing first and second derivatives.
When the data is sampled at random, this seems to be unlikely.
In any case, this condition can be verified numerically and in all of our experiments this condition is met.
\end{remark}

\section{Extending The Manifold-MLS to Function Approximation}
\label{sec:M-MLSFunction}
In the following section we use the Manifold-MLS framework to address the problem of regression over manifolds. 
Let  $\MM^{d}\subset\RR^n$ be a $d$-dimensional manifold, and $\psi:\MM^{d}\rightarrow\RR^{\widetilde{n}}$ is a function sampled with noise at noisy locations.
For the sake of clarity and simplicity of notations, in what follows we assume that $\psi:\MM^{d}\rightarrow\RR$ (i.e., scalar valued function). The extension to the multidimensional case is immediate.

\subsubsection*{Noisy Function Sampling Assumptions}
\label{sec:NoisyFunctionSampling}

\begin{enumerate}
    \item $\MM\in C^2$ is a closed (i.e., compact and boundaryless) submanifold of $\RR^n$.
    \item $\psi$ is a function from $\MM$ to $\RR$
    \item $\tilde R = \{\tilde r_i\}_{i=1}^I\subset\MM$ is an (unknown) $\hrho$ sample set with respect to the domain $\MM$ (see Definition \ref{def:h-rho-delta}).
    \item $R = \{ r_i \}_{i=1}^I$ are noisy samples of $\MM$; i.e., $ r_i= \tilde r_i + n_i$.
    \item $\norm{n_i} < \sigma_\MM$
    \item $\psi_i = \psi(\tilde r_i) + \delta_i$
    \item $\norm{\delta_i} < \sigma_{\psi}$
    \item Accordingly, the sample-set at hand is $R_\psi = \lbrace (r_i , \psi_i) \rbrace_{i=1}^N$. 
\end{enumerate}
Given some point $r $ adjacent to $\MM$ (i.e., $r = \tilde r +\varepsilon$, where $\tilde r\in\MM$ and $\varepsilon\in\RR^n$) we wish to approximate $\psi(\tilde r)$. 
Below we suggest an approximation framework and algorithm for this case, based upon the Manifold-MLS procedure described in the preliminaries. The main theoretical results of this paper are the smoothness and approximation properties as portrayed in Theorems \ref{thm:smoothM-MLSFunction}, \ref{thm:approximationM-MLSFunction}. 
Following this, we describe how one can use our proposed framework to produce interpolatory approximation. 
We conclude the section with a concise description of the algorithm. 

\subsection{Constructing The Function Approximation}
For the purpose of discussion let us assume for a moment that our samples of $\MM$ are without noise, that is $\tilde R_\psi =\lbrace (\tilde r_i, \psi(\tilde r_i))\rbrace_{i=1}^N $. Then, the most natural way to obtain an approximation to a differentaible function defined over a manifold is through approximating the function's pull back to local parametrizations. More precisely, for any $r\in \MM$, given some coordinate chart $(V, \phi)$, where $V\subset \MM$ is an open neighborhood of $r$ and $\phi:V\rightarrow \RR^{d}$, we would have liked to approximate the following function:
\[
g \defeq \psi\circ \phi^{-1}: \RR^{d} \rightarrow \RR
,\]
at $x\in\RR^{d}$ such that $\phi^{-1}(x) = r \in \MM$. 
This way, instead of trying to approximate a function from $\RR^n$ to $\RR$ we can approximate, on a local level, a function from $\RR^{d}$ to $\RR$. 
Since we assume $\MM$ to be a smooth manifold, it can be viewed, locally, as a graph of a differentiable function $\eta:T_r\MM\rightarrow T_r\MM^\perp$, where $T_r\MM \simeq \RR^{d}$ is the tangent space of $\MM$ at $r$ and $T_r\MM^\perp \simeq \RR^{n-d}$ is its orthogonal complement. 
This gives us a valid option to produce a chart around $r$ through taking $\phi \defeq \eta^{-1} = P_{T_r\MM}$ the projection onto $T_r\MM$.
Then, if we had $T_r\MM$ we could generalize the standard MLS (described in Section \ref{sec:MLSFunctionApprxoimation}) to a manifold domain in a natural way to:
\[
 p_r = \argmin_{p\in\Pi_m^d}\sum_{i=1}^N \norm{p(x_i) - g_i}^2 \theta(\norm{x_i - r})
,\]
where $x_i = \phi(\tilde r_i)$ are the projections of $\tilde r_i$ onto $\GG T_r\MM$ (i.e., $\phi(r) = 0\in\RR^{d}$), and $g_i = \psi(\tilde r_i)$. And the approximating value of $\psi(r)$ would be
\[
\psi(r) \approx \widetilde{\psi}(r) \defeq p_r(0)
.\]
 
Unfortunately, to obtain the exact tangent space we need to have access to the manifold's atlas, or at least have infinite sampling resolution. 
Moreover, in our problem-setting, the points $\lbrace r_i \rbrace_{i=1}^N$ as well as $r$ are sampled with noise. Thus, $r\notin\MM$ and it is meaningless to have a tangent space around it. 
Nevertheless, taking an in-depth look at the two-step approximation method of the Manifold-MLS (described in Section \ref{sec:M-MLSProjection}), we can use its first step to produce an alternative moving coordinate system for the manifold, or in other words an atlas of charts. 
Explicitly, for any given $r$ near $\MM$ we can apply step 1 of the Manifold-MLS procedure to obtain an approximating affine space $H(r)$ around an origin $q(r)$. 
As shown below in Theorem \ref{thm:smoothM-MLSFunction}, in case of clean samples ($\tilde{r}_i = r_i, \delta_i = 0$ for all $i$), $d(r_i, H(r)) = \OO(h^2)$, and so $\lbrace r_i \rbrace_{i=1}^N$ can be viewed as samples of a function $\eta$ defined over $H(r)$ (according to Lemma 4.4 in \cite{Sober2019MMLS} this would still be the case if the noise of $\norm{\sigma_\MM} = \OO(h)$).
That is,
\[
\eta(x):H(r)\rightarrow\MM
,\]
and $H(r)\simeq \RR^{d}$. As have been stated above, instead of approximating $\psi$ directly we can aim at approximating
\begin{equation}
g \defeq \psi \circ \eta :\RR^{d}\rightarrow \RR
,\end{equation}
at $x\in\RR^{d}$ such that $\eta(x) = r$. Similar to what have been stated above, the generalization of the MLS for function approximation is:
\begin{equation}
\label{eq:FinalWLS}
p_r = \argmin_{p\in\Pi_m^{d}}\sum_{i=1}^N \norm{p(x_i) - g_i}^2 \theta(\norm{r_i - q(r)})
,\end{equation} 
where $x_i$ are the projections of $r_i$ onto $\GG H(r)$ shifted around zero (i.e., $q(r) = 0\in\RR^{d}$) and $g_i = \psi_i$. The approximating value of $\psi(r)$ would then be
\begin{equation}
\label{eq:FinalApproximation}
\psi(r) \approx \widetilde{\psi}(r) \defeq p_r(0).
\end{equation}

As explained in detail in Lemma 4.4 in \cite{Sober2019MMLS}, under some mild assumptions $H(r)$ is a valid moving coordinate system as it approximates $T_{\tilde r}\MM$, where ${\tilde r}$ is the projection of $r$ onto $\MM$. 
Furthermore, $H(r)$ possesses the desired property of varying smoothly with respect to $r$ (see Definition \ref{def:SmoothH}). 
In any case, any valid choice of a smoothly varying coordinate system should suffice for the approximation defined in Equations \eqref{eq:FinalWLS} and\eqref{eq:FinalApproximation} to be smooth as well. 

\begin{Theorem}[Smoothness of $\widetilde\psi(r)$]
Let the Noisy Function Sampling Assumption of Section \ref{sec:NoisyFunctionSampling} hold. Let $\theta(t)$ be a $C^\infty$ radial weight function, and let $r\in U_{unique}$. 
Assume that the data is spread such that the minimization problem of \eqref{eq:Step1Minimization} is well conditioned (i.e., the least-squares matrix is invertible).
Let $q(r), H(r)$ be the minimizers of \eqref{eq:Step1Minimization}, and $J^\ast(r; q)$ be the function described in \eqref{eq:JofQ}.
Assume that  $\left( \frac{\partial^2 J^{\ast}}{\partial q_i \partial q_j}\right)_{ij} \in M_{n \times n}$ is invertible at $(r,q(r))$ for all $r\in U_{unique}$.
Then, 
\begin{enumerate}
    \item $\widetilde{\psi}(r)$ derived from equations \eqref{eq:FinalWLS} and \eqref{eq:FinalApproximation} is a $C^\infty$ function from $U_{unique}\subset\RR^n$ to $\RR$.\label{res:smoothApprox}
    \item For any points $r_0, r_1\in U_{unique}$ such that $r_1 - q(r_0) \perp H(r_0)$, we have $$\tilde{\psi}(r_1) = \tilde{\psi}(r_0).$$ \label{res:sameApprox}
\end{enumerate} \label{thm:smoothM-MLSFunction}\end{Theorem}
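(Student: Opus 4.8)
The plan is to reduce the smoothness of $\widetilde\psi$ to two ingredients that are already in hand: the smooth dependence of the moving coordinate system $(q(r), H(r))$ on $r$, furnished by Theorem \ref{thm:SmoothCoordinates} under the assumed invertibility of the Hessian of $J^\ast$, and the smooth dependence of a weighted least-squares solution on its data, which is the mechanism behind Theorem \ref{thm:SmoothMLSfunctions}. First I would fix an arbitrary $r'\in U_{unique}$ and choose, on a neighborhood of $r'$, a smooth orthonormal frame $\{e_k(r)\}_{k=1}^d$ for $\GG H(r)$; such a local frame always exists, and by Definition \ref{def:SmoothH} together with Theorem \ref{thm:SmoothCoordinates} the projected coordinates $x_i(r) = (\langle r_i - q(r), e_k(r) \rangle)_{k=1}^d \in \RR^d$ are $C^\infty$ functions of $r$, as are the weights $\theta_i(r) = \theta(\norm{r_i - q(r)})$, since $\theta\in C^\infty$ and $q(r)$ is smooth.

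Next I would write the minimization \eqref{eq:FinalWLS} in coordinates. Fixing a basis $\{b_\alpha\}$ of $\Pi_m^d$ and writing $p = \sum_\alpha c_\alpha b_\alpha$, the problem becomes the linear weighted least-squares
\[
c(r) = \argmin_{c} \norm{W(r)^{1/2}\big(A(r) c - g\big)}^2,
\]
with design matrix $A(r)_{i\alpha} = b_\alpha(x_i(r))$, diagonal weight matrix $W(r) = \mathrm{diag}(\theta_i(r))$, and data vector $g = (\psi_i)$. The normal equations give $c(r) = (A(r)^\top W(r) A(r))^{-1} A(r)^\top W(r)\, g$. Both $A(r)$ and $W(r)$ are smooth in $r$ by the previous step, and the well-conditioning hypothesis guarantees that the Gram matrix $A^\top W A$ is invertible; since matrix inversion is $C^\infty$ on the open set of invertible matrices, $c(r)$ is $C^\infty$, and hence $\widetilde\psi(r) = p_r(0) = \sum_\alpha c_\alpha(r) b_\alpha(0)$ is a composition of smooth maps, so $C^\infty$ near $r'$. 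To make the conclusion global and independent of the (only locally defined) frame, I would observe that under an orthogonal change $e_k \mapsto \sum_j O_{kj} e_j$ the coordinates transform as $x_i \mapsto O x_i$, which merely reparametrizes $\Pi_m^d$ by $p \mapsto p(O\,\cdot)$ and therefore leaves the value $p_r(0)$ unchanged; thus $\widetilde\psi(r)$ is frame-independent, and smoothness near every $r'$ yields part \ref{res:smoothApprox}.

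For part \ref{res:sameApprox} I would invoke the Projection property, Lemma \ref{lem:ProjectionUniqueness}. Since $r_1 - q(r_0)\perp H(r_0)$ with $r_0, r_1\in U_{unique}$ (and $\norm{r_1 - q(r_0)}<\mu$, as the lemma requires), it follows that $q(r_1) = q(r_0)$ and $H(r_1)\equiv H(r_0)$. The crucial observation is that the minimization \eqref{eq:FinalWLS} depends on $r$ \emph{only} through the pair $(q(r), H(r))$ — the projected coordinates $x_i$ and the weights $\theta(\norm{r_i - q(r)})$ are determined by these alone — so the least-squares problems posed at $r_0$ and at $r_1$ are literally identical, giving $p_{r_1} = p_{r_0}$ and hence $\widetilde\psi(r_1) = p_{r_1}(0) = p_{r_0}(0) = \widetilde\psi(r_0)$.

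The differentiation of the least-squares solution is routine and is identical in spirit to the proof of Theorem \ref{thm:SmoothMLSfunctions} (cf. Remark \ref{rem:NonInterpolatoryTheta}). The one point I expect to require genuine care is the passage from the smoothly varying affine space $H(r)$ — which lives in the Grassmannian and need not admit a \emph{global} smooth orthonormal frame — to honest $\RR^d$-valued coordinates $x_i(r)$; the frame-invariance of $p_r(0)$ noted above is precisely what reconciles the local coordinate computation with a globally well-defined smooth approximant.
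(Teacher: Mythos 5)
Your proposal is correct and follows essentially the same route as the paper: both reduce $\widetilde\psi(r)=p_r(0)$ to an explicit smooth matrix formula for the weighted least-squares solution (you via the normal equations $c=(A^TWA)^{-1}A^TWg$, the paper via the equivalent Backus--Gilbert/Lagrange form $\bar a = D^{-1}E(E^TD^{-1}E)^{-1}\bar c$), feed in the smoothness of $(q(r),H(r))$ from Theorem \ref{thm:SmoothCoordinates} to conclude that the matrices vary smoothly in $r$, and settle part \ref{res:sameApprox} by Lemma \ref{lem:ProjectionUniqueness} exactly as the paper does. Your explicit treatment of the local orthonormal frame and the frame-invariance of $p_r(0)$ is a small refinement of a point the paper leaves implicit, not a different argument.
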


\begin{proof}
We begin with looking at the least-squares problem with respect to a fixed coordinate system $(q,H)$ 
\begin{equation}
p_r = \argmin_{p\in\Pi^{d}_m}\sum_{i=1}^N(p(x_i) - \psi_i)^2 \theta(\norm{r_i - q})
\label{eq:LocalPolynomialApprox}
,\end{equation}
where $x_i$ are the projections of $r_i$ onto $\GG H$.
Let $\mathcal{B} =\lbrace b_j(x) \rbrace_{j=1}^J$ and $J=\binom{m+d}{d}$ be a basis of $\Pi^{d}_m$.
As shown in
\cite{bos1989moving} and later simplified in \cite{levin1998MLSapproximation}, the least-squares problem of Equation \eqref{eq:LocalPolynomialApprox} has an equivalent representation due to the Backus-Gilbert theory \cite{backus1967numerical, backus1968resolving}.
Namely, minimize
\begin{equation}\label{eq:ai}
    Q = \sum_{i=1}^N \frac{1}{\theta(\norm{q -r_i})} a_i^2
,\end{equation}
under the set of constraints
\begin{equation}\label{eq:ai_accurate}
    \sum_{i=1}^N a_i b_j(x_i) = b_j(0), ~\forall j \leq J
.\end{equation}
Then, the approximating value of $p_r(0)$ is given by 
\begin{equation}
  p_r(0) = \sum_{i=1}^Na_i\psi_i = \bar\sigma^T\bar{a}
,\end{equation}
where $\bar\sigma^T=(\psi_1,...,\psi_N)$
 and $\bar a = (a_1,...,a_N)^T$.
 Furthermore, using Lagrange multipliers, this problem can be presented as the following set of equations 
 \begin{equation}\label{eq:MatrixMultInv}
    \left(\begin{array}{cc}
        D & E \\
        E^T & 0
    \end{array}\right)
    \left(\begin{array}{c}
        \bar{a}  \\
         \bar{z} 
    \end{array}\right) = 
    \left(\begin{array}{c}
         \bar c  \\
         0 
    \end{array}\right)
,\end{equation}
where $D = 2 diag\lbrace 1/\theta(\norm{r_1 - q}), ... , 1/\theta(\norm{r_N - q})\rbrace$, $\bar{c}=(b_1(x),...,b_J(x))^T$, $E_{i,j} = b_j(x_i)$, and $\bar z$ are Lagrange coefficients.
Since $D$ is invertible we can write $\bar{a}$ explicitly as 
\begin{equation}
\label{eq:OriginalMLSSolution}
\bar{a} = D^{-1} E (E^T D^{-1} E)^{-1} \bar{c}
.\end{equation}
Now, as $\theta \in C^\infty$ it follows that, in the case of a fixed coordinate system, the minimizing polynomials $p_r$ will vary smoothly with respect to $r$. 
This result is articulated in Theorem \ref{thm:SmoothMLSfunctions}.

In our case, the coordinate system $(q(r),H(r))$ depends on the parameter $r$. Thus, we now obtain 
\[
D(r) = 2 diag\lbrace 1/\theta(\norm{r_1 - q(r)}), ... , 1/\theta(\norm{r_N - q(r)})\rbrace
,\]
and
\[
E(r)_{i,j} = b_j(x_i(r))
.\]
From Theorem \ref{thm:SmoothCoordinates} we get that $(q(r),H(r))$ vary smoothly with respect to $r$, and so we achieve that $x_i(r)$ vary smoothly as well. Combining this with the fact that $\theta\in C^\infty$ we get that the right hand side of Equation $\eqref{eq:OriginalMLSSolution}$ will still vary smoothly with respect to $r$. 
Therefore, our local polynomial approximation $p_r$ changes smoothly with respect to $r$ and so does our MLS approximation given by:
\[
\widetilde{\psi}(r) \defeq p_r(0) 
,\]
and statement \ref{res:smoothApprox} is proven.
In addition by Lemma \ref{lem:ProjectionUniqueness} we have that for all $r_0, r'\in U_{unique}$ such that $r' - q(r_0) \perp H(r_0)$ the values $q(r')=q(r_0)$ as well as $H(r')=H(r_0)$.
Thus, we achieve in our case that
\[
\widetilde{\psi}(r') = \widetilde{\psi}(r_0)
,\]
as required in \ref{res:sameApprox}.
\end{proof}
We note that a justification for the assumption regarding the Hessian $(\frac{\partial^2 J^*}{\partial q_i \partial q_j})$ is given in Remark \ref{rem:hessian} above.

After obtaining the smoothness property, we wish to show that the Moving Least-Squares approximation descried here obtains near optimal convergence rates with respect to the maximum norm on the manifold domain $\MM$
\[
\norm{\widetilde{\psi}(r) - \psi(r)}_{\MM , \infty} \defeq \max_{r\in\MM}\abs{\widetilde{\psi}(r) - \psi(r)}
.\]
Explicitly, we show that $\norm{\widetilde{\psi}(r) - \psi(r)}_{\MM , \infty} = \OO(h^{m+1})$. 
To achieve such a result we need to restrict the behaviour of our weight function $\theta$ by the following conditions.
\subsection*{Conditions on \eqref{eq:Step1Minimization} for approximation order}
\begin{enumerate}
    \item The function $\theta(t)$ is monotonically decaying and compactly supported with $supp(\theta) = c_1 h$, where $c_1$ is some constant greater than $2$. 
    \item Suppose that $\theta(c_2h)>c_3>0$, for some constants $c_2\geq 3$ and $c_3>0$.
    \item Set $\mu = rch(\MM)/2 $ in constraint \ref{init_constraint:search}.
    \item Let $r$ be such that $d(r,\MM) < rch(\MM)/4$
\end{enumerate}
We note that under these conditions in case of clean samples Lemma 4.3 of \cite{Sober2019MMLS} shows that the minimization problem \eqref{eq:Step1Minimization} is well conditioned (i.e., there are enough data sites such that the least-squares problem can be solved). 
Furthermore, Lemma 4.4 of \cite{Sober2019MMLS} shows that $$(q(r),H(r))\rightarrow (P(r), T_{P(r)}\MM)$$ as $h\rightarrow 0$, where $P(r)$ is the orthogonal projection of $r$ onto $\MM$.

\begin{Theorem}[Approximation order for clean samples]
Let $\MM$ be a $C^{m+1}$ smooth submanifold of $\RR^n$, let the Noisy Sampling Assumption of Section \ref{sec:NoisyFunctionSampling} hold with $\sigma_\MM = 0$ and $\sigma_\psi = 0$.
Let $r\in \MM$ and assume that $\psi:\MM \rightarrow \RR$ and $ \psi \in C^{{m}+1}(\MM)$.
Let $(q(r),H(r))$ be the coordinate system resulting from the minimization of \eqref{eq:Step1Minimization}. 
Then, for fixed $\rho$ and $\delta$, the approximant given by equations \eqref{eq:FinalWLS}-\eqref{eq:FinalApproximation}, yields the following error bound, for any $h$ small enough:
\begin{equation}\label{eq:approxOrderM-MLSFunction}
\norm{\widetilde{\psi}(r) - \psi(r)}_{\MM , \infty} <  M \cdot h^{{m}+1}
\end{equation}
\label{thm:approximationM-MLSFunction}\end{Theorem}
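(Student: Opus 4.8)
The plan is to reduce the manifold problem to the flat $d$-dimensional setting of Theorem \ref{thm:OrderMLSfunctions} by working entirely inside the chart $(q(r),H(r))$ produced by step~1 of the M-MLS. Fix $r\in\MM$. Since $\sigma_\MM=0$ and $r\in\MM$, Lemma 4.4 of \cite{sober2016MMLS} guarantees, under the stated conditions on $\theta$, that $H(r)$ is $\OO(h)$-close to $T_r\MM$, so that for $h$ small enough $\MM$ is locally a single-valued $C^{m+1}$ graph $\eta:\GG H(r)\to\MM$. Writing $x_i$ for the chart coordinate of $r_i$ (the projection of $r_i-q(r)$ onto $\GG H(r)$) and $g\defeq\psi\circ\eta$, the approximation \eqref{eq:FinalWLS}--\eqref{eq:FinalApproximation} becomes precisely a flat $d$-dimensional MLS fit of the data $\{(x_i,g(x_i))\}$ evaluated at the origin. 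The first thing I would record is that the chart coordinate of $r$ itself is $0$: the constraint $r-q(r)\perp H(r)$ forces the projection of $r-q(r)$ onto $\GG H(r)$ to vanish, whence $\eta(0)=r$ and $g(0)=\psi(r)$. Thus the quantity to bound is exactly $\abs{p_r(0)-g(0)}$.

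Next I would check the hypotheses of Theorem \ref{thm:OrderMLSfunctions} for the pulled-back problem. For smoothness of the target: since $\MM\in C^{m+1}$ and $H(r)$ approximates $T_r\MM$, the graph map $\eta$ is $C^{m+1}$ with derivatives bounded uniformly in $r$ (by compactness of $\MM$ and positivity of the reach), so $g=\psi\circ\eta\in C^{m+1}$ with uniformly bounded norm. For the sampling structure, I would argue that the projected set $\{x_i\}$ is an $\hrho$ set of fill distance $\OO(h)$ with only mildly altered density and separation constants: within the support of $\theta$, a ball of radius $\OO(h)$, the orthogonal projection from $\MM$ onto $\GG H(r)$ has derivative that is $\OO(h)$-close to an isometry (the angle between $H(r)$ and $T_r\MM$ is $\OO(h)$, and the curvature contribution over such a ball is $\OO(h^2)$), so distances are distorted by a factor $1+\OO(h)$. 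Consequently the fill distance, the density bound \eqref{def:rho}, and the separation \eqref{def:delta} transfer to the chart with fixed $\rho',\delta'$.

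The remaining subtlety is that the weight in \eqref{eq:FinalWLS} uses the ambient distance $\theta(\norm{r_i-q(r)})$ rather than the chart distance $\theta(\norm{x_i})$. Here I would invoke the extended form of Theorem \ref{thm:OrderMLSfunctions} discussed in Remark \ref{rem:NonInterpolatoryTheta}, under which the solution still admits the smooth-matrix representation \eqref{eq:OriginalMLSSolution} and the order argument goes through for the graph-weighted form $\theta(\norm{(x,0)-(x_i,f(x_i))})$; quantitatively, within the support the two distances differ only by $\OO(h^2)$, since the discrepancy $\norm{r_i-q(r)}-\norm{x_i}$ is governed by the normal (curvature) component. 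Well-conditioning under the stated conditions on $\theta$ is exactly Lemma 4.3 of \cite{sober2016MMLS}. With all hypotheses verified, Theorem \ref{thm:OrderMLSfunctions} yields $\abs{p_r(0)-g(0)}<M'\,(\OO(h))^{m+1}=\OO(h^{m+1})$.

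Finally I would upgrade the pointwise estimate to the maximum norm: the $C^{m+1}$ norm of $g$, the sampling constants $\rho',\delta'$, and the support size are all controlled uniformly in $r\in\MM$ by compactness, so the constant $M$ is independent of $r$ and of $h$, and taking $\max_{r\in\MM}$ gives \eqref{eq:approxOrderM-MLSFunction}. The main obstacle I anticipate is precisely this uniform quantitative control of the chart: one must show that replacing the true tangent space $T_r\MM$ by the approximate minimizer $H(r)$ perturbs the pulled-back data, the weights, and the $\hrho$ structure only at order $\OO(h)$ (and the weight discrepancy at order $\OO(h^2)$), with all constants uniform over the compact manifold, so that these perturbations are absorbed into the leading $\OO(h^{m+1})$ term rather than spoiling it.
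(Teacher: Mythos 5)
Your proposal is correct and follows essentially the same route as the paper's proof: reduce to a weighted least-squares problem on the chart $H(r)$, verify that the pulled-back function $g=\psi\circ\eta$ is $C^{m+1}$ and that the projected samples remain an $\hrho$ set with only $\OO(h)$ distortion, apply the flat-domain error bound of \cite{levin1998MLSapproximation}, and obtain uniformity of the constant over $r\in\MM$ by compactness. The only cosmetic difference is that the paper derives the closeness of the data to $H(r)$ directly from the $\OO(h^4)$ bound on the cost functional $J$ (giving $d(r_i,H(r))=\OO(h^2)$) and explicitly argues scale-invariance of the Backus--Gilbert coefficients $a_i^r$, whereas you cite Lemma 4.4 of \cite{sober2016MMLS} and the remarks on the modified weight; both are legitimate.
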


\begin{proof}
The outline of the proof is as follows:
\begin{enumerate}
    \item \label{outline:Happroximation}We show that in a neighborhood of $q(r)$
    \begin{equation}
        d(r_i, H(r)) =  \OO(h^2) 
    \label{eq:Happroximation}\end{equation}
    \item \label{outline:hrho}Using \eqref{eq:Happroximation} we get that for small enough $h$ the data projected onto $H(r)$ is still a $\tilde h$-$\tilde \rho$-$\tilde \delta$ set (for $\tilde h = \OO(h)$, $\tilde \rho \approx \rho$, and $\tilde \delta \approx \delta$) in an $\OO(h)$ neighborhood of $q(r)$.
    \item \label{outline:Mof_r}Using a known bound regarding weighted least-squares polynomial approximation (Theorem 4 of \cite{levin1998MLSapproximation}), we show that for any $r\in\MM$ there exist $M(r)$ (independent of $h$) such that
    \[\abs{\widetilde{\psi}(r) - \psi(r)} <  M(r) \cdot h^{{m}+1}.\]
    \item We show that $M(r)$ is bounded by a constant $M$ for all $r\in \MM$ and get
    \[\abs{\widetilde{\psi}(r) - \psi(r)} <  M \cdot h^{{m}+1}.\]
\end{enumerate}
We first notice that $q = P(r)$, the projection of $r$ onto $\MM$, coupled with $H = T_{P(r)}\MM$ maintain constraints \ref{init_constraint:perp}-\ref{init_constraint:proximity} of Equation \eqref{eq:Step1Minimization}. Since the projection onto $\MM$ keeps the condition 
\[
r - P(r) \perp T_{P(r)}\MM
,\] 
constraint \ref{init_constraint:perp} is met. 
By the fact that $P(r)=r\in\MM$ and $\mu>0$ we get that constraint \ref{init_constraint:search} is met.
In addition, since $h$ is the fill distance, $\sigma_\MM = 0$ and $supp(\theta) = ch$, 
there exists some $ r_j\in R \subset\MM$ such that $\norm{ r_j - P(r)} < h$.
Therefore, 
\[
\#R\cap B_{h}(P(r)) \neq 0
,\] 
and constraint \ref{init_constraint:proximity} is met as well. 

Furthermore, since the tangent space is a first order approximation of a manifold $\MM\in C^2$, the cost function is compactly supported, and the sampling is an $\hrho$ set (see the definition of $\rho$  in \eqref{def:rho}), then for all $x\in\MM$ (including $P(r)$) we have
\[
J(r; x, T_{x}\MM) =  \sum_{i=1}^N d^2( r_i, T_{x}\MM)\theta(\norm{ r_i - x}) = \OO(h^4)
,\]
and so
\begin{equation}
  J(r; x, T_{x}\MM) = \OO(h^4) \text{, as } h\rightarrow 0. 
\label{eq:TangentApproximationNoise}\end{equation}
Thus,
\begin{equation}
    J(r;q(r), H(r)) = \sum_{i=1}^N d^2( r_i, H(r))\theta(\norm{ r_i - q}) = \OO(h^4)
,\end{equation}
and since $\theta(c_2h)>c_3>0$ as well as monotonically decreasing, we get that for $r_i\in B_{c_2h}(q(r))$ 
\begin{equation}
d(r_i, H(r)) = \norm{P_{H(r)}( r_i)-  r_i} = \OO(h^2) \label{eq:HapprpoxR}
.\end{equation}

Hence, we showed that \eqref{eq:Happroximation}  holds, and the sample set projected onto $H(r)$ is an $\tilde h$-$\tilde\rho$-$\tilde\delta$, where $\tilde h = \OO(h)$.
Furthermore, since $H(r)\rightarrow T_{P(r)}\MM$ as $h\rightarrow 0$, for a small enough $h$ we get that $\tilde\rho \approx \rho$ and $\tilde \delta \approx \delta$.

Next, we look at the approximated object, locally, as a function $g_r:H(r)\simeq \RR^d \rightarrow \RR$.
Explicitly, let $V_r\subset\MM$ be a neighborhood of $r$ and let $\phi_r:V_r\rightarrow H(r)\simeq \RR^d$ be our chart (i.e., the orthogonal projection onto $H(r)$). 
Then, 
\begin{equation}\label{eq:g_r_def}
g_r \defeq \psi \circ \phi_r^{-1}.    
\end{equation}

The minimization problem of \eqref{eq:FinalWLS}-\eqref{eq:FinalApproximation} is a weighted least-squares polynomial approximation of degree $\leq m$ around a point $x\in \RR^{d}$, with respect to the sample set  $\lbrace (x_i , g_r(x_i))\rbrace_{i=1}^N$.
We wish to give a bound for such local approximation for any $r\in \MM$ ; i.e., show that statement \ref{outline:Mof_r}
of the outline holds. 
To meet this goal, we wish to produce a bound similar in spirit to the one given in Theorem \ref{thm:OrderMLSfunctions}.
Accordingly, we first verify that $g_r\in C^{m+1}$.
Then, we use a general bound for weighted least-squares to find $M(r)$ such that
\[\abs{\widetilde{\psi}(r) - \psi(r)} <  M(r) \cdot h^{{m}+1}.\]
Since $\MM \in C^{m+1}$, around any point $r$, $\MM$ is locally a graph of a $C^{m+1}$ function $\varphi:H(r) \rightarrow H(r)^\perp$. 
Therefore, $\phi_r^{-1}$  can be written in the coordinate system $H(r)\times H(r)^\perp$ as
\[
\phi_r^{-1}(x) = (x,\varphi(x)) ~,~\forall x\in \phi_r(V_r)
,\]
and thus $\phi_r^{-1}(x) \in C^{m+1}$.
Since $\psi \in C^{m+1}$, and from \eqref{eq:g_r_def}, we have $g_r\in C^{m+1}$ as required.

Using the smoothness of $g_r$ we can use a known bound for weighted least-squares. 
Without loss of generality, let $\phi_r(r) = 0$.
Then, according to Theorem 4 from \cite{levin1998MLSapproximation} we  get that
\begin{equation}\label{eq:bound1}
    \abs{g_r(0) - \widetilde{g}_r(0)} \leq \left(1 + \sum_{i=1}^N\abs{a_i^r}\right)E_{B_{ch}(0), \Pi_m^d}(g_r)     
,\end{equation}
where $a_i^r$ are the coefficients defined in \eqref{eq:ai}, 
\[
E_{B_{ch}(0), \Pi_m^d}(g_r) \defeq \inf_{p\in\Pi_m^d}\norm{g_r - p}_{B_{ch}(0), \infty}
\]
and $\norm{g_r - p}_{B_{ch}(0), \infty}$ is the restriction of the infinity norm to the domain $B_{ch}(0)$.
We wish to note that the term $ \left(1 + \sum_{i=1}^N\abs{a_i^r}\right)$ is independent of $h$.
Due to the definition of the coefficients $a_i^r$ in Equations \eqref{eq:ai}-\eqref{eq:ai_accurate}, if $\theta$ is consistent across scales (i.e., scaling $\theta$ with $h$ such that $\theta(th)=f(t)$), then the weights are scale invariant.
Furthermore, if, for example, we take $\mathcal{B}$ to be the standard basis of $\Pi_m^d$ (i.e., $\{1, x_1, ... , x_n, x_1^2,  x_1 x_2, ...\}$) then \eqref{eq:ai_accurate} are met regardless of the scale chosen (i.e., replacing $x_i$ with $\kappa x_i$ will not change the fact that the equations hold).
Thus, the coefficients for the local weighted least-squares problem are scale invariant. Since the problem \eqref{eq:ai_accurate}  is independent of the basis choice  for $\Pi_m^d$, we have that $a^r_i$ are independent of the scale. 

Taking the Taylor expansion as a possible polynomial approximation we get that 
\[
E_{B_{ch}(0), \Pi_m^d}(g_r) \leq R_{m}[g_r]
,\] 
where $R_{m}[g_r]$ is the Taylor remainder of order $m$.
That is,
\begin{equation}\label{eq:bound2}
R_{m}[g_r] = \sum_{\abs{\alpha}=m+1}\partial^\alpha g_r(\xi)\cdot\frac{(ch)^{m+1}}{(m+1)!}
,\end{equation}
where $\xi\in B_{ch}(0)$, $\alpha$ is a multi-index $\alpha = (\alpha_1, \ldots, \alpha_d)$, $\abs{\alpha} = \alpha_1 + \ldots +\alpha_d$ and
\begin{equation}
    \partial^\alpha g_r(x)= \frac{\partial^{\alpha_1}\cdots \partial^{\alpha_d}}{\partial x_1^{\alpha_1}\cdots \partial x_d^{\alpha_d}}g_r(x) 
.\end{equation}
Therefore, plugging \eqref{eq:bound2} back into \eqref{eq:bound1} we get
\begin{equation}
    \abs{g_r(0) - \widetilde{g}_r(0)} \leq M(r) h^{m+1}       
,\end{equation}
where
\begin{equation}
    M(r) \defeq \max_{x\in \bar{B}_{ch}(0)} \left(1 + \sum_{i=1}^N\abs{a_i^r}\right)\frac{c^{m+1}}{(m+1)!}\sum_{\abs{\alpha} = m+1}\partial^\alpha g_r(x)
\end{equation}
Since we wish to bound $M(r)$ for all $r$, we note that
\begin{equation}
    \sup_{r\in\MM} M(r) = \sup_{r\in\MM}\max_{x\in \bar{B}_{ch}(0)} \left(1 + \sum_{i=1}^N\abs{a_i^r}\right)\frac{c^{m+1}}{(m+1)!}\sum_{\abs{\alpha} = m+1}\partial^\alpha g_r(x)
,\end{equation}
and so
\begin{align}
    \sup\limits_{r\in\MM} M(r) & \leq \sup\limits_{r\in\MM, x\in \bar{B}_{ch}(0)} \left(1 + \sum\limits_{i=1}^N\abs{a_i^r}\right)\frac{c^{m+1}}{(m+1)!}\sum_{\abs{\alpha} = m+1}\partial^\alpha g_r(x)    \\
     &\\
     & =\sup\limits_{(r, x)\in \MM\times \bar{B}_{ch}(0)}  M(r, x)
,\end{align}

Note that since $g_r(x)$ is continuous with respect to $r$ so is $\partial^\alpha g_r(x)$.
Furthermore, $a_i^r$ are continuous in $r$ as was mentioned in the proof of Theorem \ref{thm:smoothM-MLSFunction} due to their representation in \eqref{eq:OriginalMLSSolution}. 
In addition, since $g_r(x)$ is in $C^{m+1}$ with respect to $x$ then $\partial^\alpha g_r(x)$ is smooth for $\abs{\alpha} = m+1$.
As a result, $M(r, x)$ is continuous and since the domain $\MM\times \bar{B}_{ch}(0)$ is compact we get that its supremum is achieved and
\begin{equation}
    \abs{g_r(0) - \widetilde{g}_r(0)} \leq M(r) h^{m+1} \leq M h^{m+1}
,\end{equation}
where
\[
M \defeq  \max\limits_{(r, x)\in \MM\times \bar{B}_{ch}(0)}  M(r, x) 
.\]

\end{proof}

\subsection{Interpolation Scheme}

The above-mentioned mechanism can be used to provide an interpolatory function approximation. As mentioned above, in Section \ref{sec:MLSFunctionApprxoimation}, when dealing with function approximation over a flat domain, if we wish the approximation to become interpolatory at the samples $\lbrace r_i \rbrace_{i=1}^N$ all we need is to choose a weight function $\theta$ such that $\theta(0) = \infty$.
Evidently, the fact that $\lim_{t\rightarrow 0} \theta(t) = \infty$ forces the moving least-squares approximation $\widetilde{\psi}$ to reach $\lim_{r\rightarrow r_i}\widetilde{\psi}(r) = \psi(r_i)$.
The next proposition shows the analog of this to the manifold case.
\begin{prop} (Interpolation)\label{prop:interpolation}
Let Noisy Sampling Assumption of Section \ref{sec:NoisyFunctionSampling} hold with $\sigma_\MM = 0$ and $\sigma_\psi = 0$.
Let $\theta\in C^\infty(\RR\setminus\{0\})$ be a decaying weight function satisfying $\lim_{t\rightarrow 0}\theta(t) = \infty$, and let $r\in U_{unique}$.
We define the approximation of $\psi$ at $r$ as $\widetilde{\psi}(r)$ of Equation \eqref{eq:FinalApproximation} for all $r\notin R$ (our sample set), and for $r=r_i\in R$ we define $\widetilde{\psi}(r_i) = \psi(r_i)$.
Then, $\widetilde{\psi}$ is a smooth function interpolating our sample set.
\end{prop}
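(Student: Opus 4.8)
The plan is to decompose $U_{unique}$ into the open region where $q(r)\notin R$ and the normal fibers $F_{i_0}=\{r\in U_{unique}:q(r)=r_{i_0}\}$ on which a single weight diverges, and to treat the singularity of $\theta$ only on the latter. Away from the fibers every weight $\theta(\norm{r_i-q(r)})$ is finite and the map $r\mapsto 1/\theta(\norm{r_i-q(r)})$ is smooth, so the representation $\bar a = D^{-1}E(E^TD^{-1}E)^{-1}\bar c$ from the proof of Theorem \ref{thm:smoothM-MLSFunction} is again a product of smooth matrices and $\widetilde\psi$ is $C^\infty$ there by the identical argument; result \ref{res:sameApprox} of that theorem (constancy along normal fibers, which depends only on $q(r),H(r)$) likewise carries over. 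It remains to establish smoothness and the interpolation value across each $F_{i_0}$, where $\theta(\norm{r_{i_0}-q(r)})=\infty$.

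Interpolation I would read off directly from the Backus--Gilbert formulation \eqref{eq:ai}--\eqref{eq:ai_accurate}. On $F_{i_0}$ one has $1/\theta(\norm{r_{i_0}-q(r)})=0$, so $a_{i_0}$ is unpenalised in $Q$, while $x_{i_0}=0$ (the projection of $q(r)=r_{i_0}$ onto $H(r)$ is the origin), hence $b_j(x_{i_0})=b_j(0)$. Consequently $a_{i_0}=1$, $a_i=0$ $(i\neq i_0)$ satisfies the constraints \eqref{eq:ai_accurate} with $Q=0$, and it is the unique minimiser, since any other zero-cost solution must have $a_i=0$ for $i\neq i_0$ (the corresponding weights being strictly positive), forcing $a_{i_0}=1$. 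Thus $\widetilde\psi=\sum_i a_i\psi_i=\psi(r_{i_0})$, matching the definition $\widetilde\psi(r_{i_0})=\psi(r_{i_0})$.

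The main obstacle is smoothness across $F_{i_0}$, where $D^{-1}$ carries the diverging entry $\theta(\norm{r_{i_0}-q(r)})$. I would set $s\defeq 1/\theta(\norm{r_{i_0}-q(r)})$, a smooth function of $r$ vanishing exactly on $F_{i_0}$ (choosing $\theta$ so that $1/\theta$ is a smooth function of the squared distance, as is standard for interpolatory MLS, which neutralises the non-smoothness of the norm at $q=r_{i_0}$), and isolate the $i_0$-term of the Gram matrix: with $v_i\defeq(b_1(x_i),\dots,b_J(x_i))^T$,
\[
E^TD^{-1}E=\tfrac12\Big(\tfrac1s\,v_{i_0}v_{i_0}^T+B\Big),\qquad B\defeq\sum_{i\neq i_0}\theta(\norm{r_i-q(r)})\,v_iv_i^T.
\]
Applying the Sherman--Morrison identity to invert this rank-one update of $B$, the diverging factor $1/s$ carried by the $i_0$-row of $D^{-1}E$ is cancelled by a compensating factor $s$ produced by the inversion, so that each coefficient becomes a ratio of smooth functions with non-vanishing denominator $s+v_{i_0}^TB^{-1}v_{i_0}$; for instance $a_{i_0}=v_{i_0}^TB^{-1}\bar c/(s+v_{i_0}^TB^{-1}v_{i_0})$ and $a_i=\theta(\norm{r_i-q(r)})\,v_i^TA^{-1}\bar c$ for $i\neq i_0$. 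Since $q(r),H(r)$, and hence $v_i(r),s(r)$, are $C^\infty$ by Theorem \ref{thm:SmoothCoordinates}, this exhibits every $a_i(r)$ as a smooth function in a neighbourhood of $F_{i_0}$, so $\widetilde\psi=\sum_i a_i\psi_i$ extends smoothly across the fiber; evaluating at $s=0$ (where $\bar c=v_{i_0}$) recovers $a_{i_0}=1$, $a_i=0$, consistent with the previous paragraph.

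The two technical points I expect to dwell on are the invertibility of the reduced Gram matrix $B$ — which requires the remaining sites $\{x_i\}_{i\neq i_0}$ to stay unisolvent for $\Pi_m^d$, guaranteed for small $h$ by the $\hrho$ density (there is a surplus of well-spread points, and the $\delta$-separation ensures $\norm{r_i-q(r)}\ge h\delta>0$ for $i\neq i_0$ on $F_{i_0}$, so no second weight diverges simultaneously) — and the verification that $1/\theta$ composed with the squared distance is genuinely smooth through $q=r_{i_0}$. Granting these, patching the off-fiber smoothness with the across-fiber smoothness yields $\widetilde\psi\in C^\infty(U_{unique})$ interpolating the sample set, completing the proof.
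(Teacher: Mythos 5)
Your argument is correct, but at the crucial step it takes a genuinely different route from the paper. The paper never decomposes $U_{unique}$ along the fibers $\{q(r)=r_{i_0}\}$ and never inverts the Gram matrix near the singularity: it simply returns to the KKT system \eqref{eq:MatrixMultInv}, notes that when a weight diverges the diagonal block $D$ acquires a zero entry so the closed form \eqref{eq:OriginalMLSSolution} breaks down, but that the full bordered matrix $\bigl(\begin{smallmatrix} D & E \\ E^T & 0 \end{smallmatrix}\bigr)$ remains smooth \emph{and invertible}, so $\bar a$ (read off from the inverse applied to $(\bar c,0)^T$) stays smooth; the interpolation property itself is asserted in one line from $\theta(0)=\infty$. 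Your Sherman--Morrison computation buys several things the paper's one-liner does not: it exhibits the limiting coefficients $a_{i_0}=1$, $a_i=0$ explicitly (so interpolation is proved rather than asserted via the Backus--Gilbert constraints), it pinpoints the hypothesis that actually does the work --- that $s=1/\theta$ composed with the squared distance is smooth through zero, which the paper also needs for its bordered matrix to be smooth but never states --- and it locates the singular set correctly at $q(r)\in R$ rather than at $r\in R$. The price is the extra bookkeeping you must justify (invertibility of the reduced Gram matrix $B$ and the fact that no two weights diverge simultaneously, both of which you correctly derive from the $\hrho$ conditions), where the paper needs only the single claim that the bordered matrix is nonsingular. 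One caveat, shared equally by your argument and the paper's: for the fiat value $\widetilde\psi(r_i)=\psi(r_i)$ to coincide with the smooth extension of the formula, the point $r_i$ must lie on its own fiber, i.e. $q(r_i)=r_i$; neither you nor the paper verifies this, so it is not a gap relative to the paper's own proof, but it is worth flagging if you polish the write-up.
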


\begin{proof}
Let $x_i$ be the projection of $r_i$ onto the coordinate system $(q(r), H(r))$ solving the minimization of \eqref{eq:Step1Minimization}, then the condition $\lim_{t\rightarrow 0}\theta(t) = \infty$ enforces the local polynomial approximation of Equation \eqref{eq:FinalWLS} $p_r$ to satisfy $\lim_{r\rightarrow r_j}p_{r}(0) = \psi(r_j)$.
It is clear that the approximation is smooth around each $r\notin R$, as this is explained in Theorem \ref{thm:smoothM-MLSFunction} above.
Assume that $r=r_i\in R$ for some specific $i$. 
Then, revisiting Equation \eqref{eq:MatrixMultInv}, we can see that $D$ is not invertible and the formula achieved for $\bar{a}$ does not hold.
Nevertheless, the matrix
\[\left(\begin{array}{cc}
        D & E \\
        E^T & 0
    \end{array}\right)\] 
is smooth and invertible, and by the Inverse Function Theorem we get that its inverse must be smooth as well.
Thus, the coefficients vector $\bar{a}$ will be smooth still, and accordingly, so will the MLS approximation $\widetilde \psi(r)$.
\end{proof}

\begin{remark}
	The extension of the interpolatory scheme to the multidimensional case is immediate. 
\end{remark}
\subsection{Algorithm Description}
As a result of the theoretical discussion, our procedure will go along the lines of the Manifold-MLS procedure described above. For the sake of generality, we refer to the multidimensional case where $\psi:\MM^d\rightarrow\RR^{\widetilde{n}}$. The two major steps of the procedure are as follows,
\begin{itemize}
    \item[1.] Find a local $d$-dimensional affine space $H(r)$ approximating the sampled points ($H \simeq\RR^{d}$). This affine space will be used in the following step as a local coordinate system.
    \item[2.] Approximate the function $g:H(r)\rightarrow \RR^{\widetilde{n}}$ through weighted least-squares, based upon the samples $\lbrace (x_i , g_i)\rbrace_{i=1}^N$, where $x_i$ are the projections of $r_i$ onto $H(r)$ and $g_i = \psi_i$.
\end{itemize}

In what follows, we discuss in more details both steps as well as their implementation. The implementation of step 2 is trivial as it is a standard least-squares problem.  Thus, after explaining it, we give a short description of it in Algorithm \ref{alg:ApproximateF}. However, the implementation of step 1 is more complicated and will be discussed below in more details with a concise summary in Algorithm \ref{alg:FindH}.  

\subsubsection*{Step 1 - Finding The Local Coordinates}
Find a $d$-dimensional affine space $H$, and a point $q$ on $H$, such that the following constrained problem is minimized:

\begin{equation}
\begin{array}{ccclccc}
 & & & J(r; q, H) = \sum\limits_{i=1}^{N} d(r_i , H)^2 \theta(\| r_i - q\|) & & &  \\ 
s.t. & & & & & & \\
 & & & r-q \perp H ~~~~ i.e., ~~ r-q\in H^\perp& & &
\end{array}
\label{eq:step1}
,\end{equation}

\label{alg:Step1}
Motivated by the results of \cite{aizenbud2019approximating} that shows that applying iterated least-squares results with the leading principal space, we find the affine space $H$ by an iterative procedure. Assuming we have $q_j$ and $H_j$ at the $j^{th}$ iteration, we compute $H_{j+1}$ by performing a linear approximation over the coordinate system $H_j$. In view of the constraint $r-q\perp H$, we define $q_{j+1}$ as the orthogonal projection of $r$ onto $H_{j+1}$. We initiate the process by taking $q_{0} = r$ and choose $d$ basis vectors $\lbrace u_k^1 \rbrace_{k=1}^d$ randomly. This first approximation is denoted by $H_1$. Thence, we compute:
\[q_1 = \sum_{k=1}^d \langle r - q_0 , u_k^1 \rangle u_k^1 + q_0 = q_0.\]
Upon obtaining $q_1 , H_1$ we continue with the iterative procedure as follows:
\begin{itemize}
\item Assuming we have $H_j , q_j$ and its respective basis $\lbrace u_k^j \rbrace_{k=1}^d$ w.r.t the origin $q_j$, we project our data points $r_i$ onto $H_j$ and denote the projections by $x_i$. Then, we find a linear approximation of the samples $f_i^j = f^j(x_i) = r_i$: 
\begin{equation}
\vec{l}^j(x) = \argmin_{\substack{\vec{p} = (p_1,..,p_n), \\ p_k \in \Pi_1^{d}}} \sum_{i=1}^{N} \| \vec{p}(x_i) - f_i^j \|^2 \theta(\| r_i - q_j\|)
.\end{equation} 

Note, that this is a standard weighted linear least-squares as $q_j$ is fixed!
Thus, it involves a single inversion of a $(d+1)\times(d+1)$ dimensional matrix (applied $n$ times for each coordinate in the ambient space), and the computational complexity $\OO(nd^2)$.
\item Given $\vec{l}^j(x)$ we obtain a temporary origin:
\[\widetilde{q}_{j+1} = \vec{l}^j(0).\]
Then, around this temporary origin we build a basis $\hat{B} = \lbrace v_k^{j+1} \rbrace_{k=1}^d$ for $H_{j+1}$ with:
\[
v_k^{j+1} \defeq \vec{l}^j(u^j_k) - \widetilde{q}_{j+1} ~~,~~ k=1,...,d
\]
We then use the basis $\hat{B}$ in order to create an orthonormal basis $B = \lbrace u_k^{j+1} \rbrace_{k=1}^d$ through a Grham-Schmidt process, which costs $\mathcal{O}(nd^2)$ flops. Finally we derive
\[q_{j+1} = \sum_{k=1}^d \langle r - \widetilde{q}_{j+1} , u_k^{j+1} \rangle u_k^{j+1} + \widetilde{q}_{j+1}.\]
This way we ensure that $r - q_{j+1} \perp H_{j+1}$. 
The complexity of computing this projection is $\OO(nd)$.
\end{itemize}

\begin{remark}
In \cite{Sober2019MMLS}, the initialization of the basis vectors of $H_1$ is based upon a local PCA of the data points. In theory, this ought to improve the number of iterations needed for convergence, at the expense of significantly increasing the computational time. Alternatively, we could have utilized a low-rank approximation of the PCA such as proposed in \cite{aizenbud2016SVD} to do this more efficiently. Nevertheless, we found that in practice a random initialization requires a similar number of iterations, and thus reduces the computation time even further. For this reason, we used a random initialization of $H_1$.
\end{remark}

See Figure \ref{fig:SphereLinear} for the approximated local coordinate systems $H$ obtained by Step 1 on noisy samples of a sphere.

\begin{figure}[h]
\begin{centering}
\includegraphics[width={0.6\linewidth}]{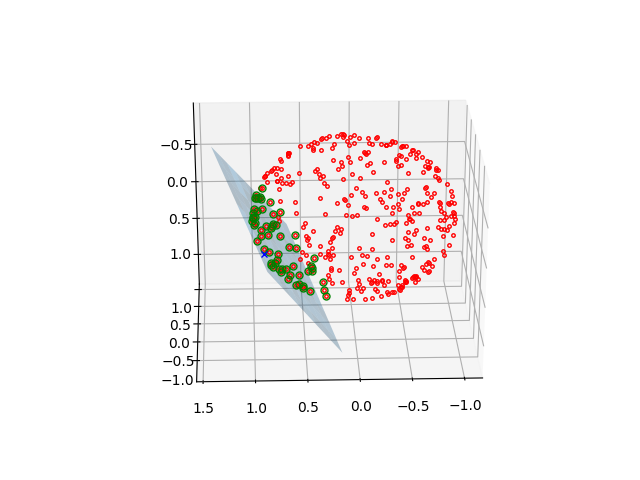}

\par\end{centering}

\caption{An approximation of the local coordinates $H(r)$ (blue plain) resulting from Step 1 implementation after three iterations, where $r$ is marked by the blue $ \times $. Marked in green are the points affecting the approximation. \label{fig:SphereLinear}}

\end{figure}

\subsubsection*{Step 2 - The approximation of $\psi$}
Let $\lbrace e_k \rbrace_{k=1}^d$ be an orthonormal basis of $H(r)$ (taking $q(r)$ as the origin), and let $x_i$ be the orthogonal projections of $r_i$ onto $H(r)$ (i.e., $x_i = q(r)+ \sum_{k=1}^d \langle r_i-q(r) , e_k \rangle e_k$). 
As before, we note that $r$ is orthogonally projected to the origin $q$. 
Now we would like to approximate $g:\RR^{d} \rightarrow \RR^{\widetilde{n}}$, such that $g_i=g(x_i)=\psi_i$. The vector-valued approximation of $g$ is performed by minimizing the weighted least-squares cost function, using a polynomial $\vec{p}_r(x) = (p^1_r(x), ... , p^{\widetilde{n}}_r(x))^T$ where $p^k_r(x) \in \Pi_m^{d}$, for  $1 \leq k \leq \widetilde{n}$.

\begin{equation}
\vec{p}_r(x) = \argmin_{
\begin{array}{c}
{\scriptstyle {p}^k\in \Pi_m^{d}} \\
{\scriptstyle 1 \leq k \leq \widetilde{n}}
\end{array}}
\sum_{i=1}^{N} \norm{ \vec{p}(x_i) - \psi_i }^2 \theta(\norm{r_i - q})
\label{eq:Step2Psi}
.\end{equation} 

The approximation $\widetilde{\psi}(r)$ is then defined as:
\begin{equation}
\widetilde{\psi}(r) = \vec{p}_r(0)
\label{eq:Step2PsiProjection}\end{equation}

\begin{remark}
The weighted least-squares approximation is invariant to the choice of an orthonormal basis of $\RR^{d}$.
\end{remark}
\begin{remark}
The requirement $r-q(r)\perp H(r)$ along with Assumption \ref{eq:uniqueAssume} implies that $\widetilde{\psi}(r)$ would be the same for all $r$ such that $r - q(r) \perp H(r)$ and $r \in U_{unique}$ the uniqueness domain.
\end{remark}
\begin{remark}
	To save computation time, note that the normal equations of \eqref{eq:Step2Psi} are the same for all $ \widetilde{n} $ coordinates just with a different right hand side. Thus, the least-squares matrix should be inverted only once.
\end{remark}

\begin{algorithm}
\caption{Finding The Local Coordinate System $(H(r),q(r))$}
\label{alg:FindH}
\begin{algorithmic}[1]
\State {\bfseries Input:} $\lbrace r_i \rbrace_{i=1}^N, r, \epsilon$
\State{\bfseries Output:}\begin{tabular}[t]{ll}
                         $q$ - an $n$ dimensional vector \\
                         $U$ - an $n\times d$ matrix whose columns are $\lbrace u_j \rbrace_{j=1}^d$ 
                         \end{tabular}
                         \Comment{$H = q + Span\lbrace u_j \rbrace_{j=1}^d$}
\State Define $R$ to be an $n\times N$ matrix whose columns are $r_i$
\State Initialize $U$ with the first $d$ principal components of the spatially weighted PCA 
\State $q\leftarrow r$
\Repeat
    \State $q_{prev} = q$
    \State $\tilde{R} = R - repmat(q,1,N)$
    \State $\tilde{R} = \tilde{R} \cdot \Theta$ \Comment{Where $\Theta = diag(\sqrt{\theta(\norm{r_1-q})}, \ldots, \sqrt{\theta(\norm{r_N-q})})$}
    \State $X_{N\times d} = \tilde{R}^T U$ \Comment{Find the representation of $r_i$ in $Col(U)$}
    \State Define $\tilde{X}_{N\times (d+1)} = \left[(1,...,1)^T, X\right]$
    \State Solve $\tilde{X}^T\tilde{X}\alpha = \tilde{X}^T \tilde{R}^T$ for $\alpha \in M_{(d+1)\times n}$ \Comment{Solving the LS minimization of $\tilde{X}\alpha \approx \tilde{R}^T$}
    \State $\tilde{q} = q + \alpha(1,:)^T$
    \State $Q, \hat{R} = qr(\alpha(2:end, :)^T - repmat(\tilde{q},1,d))$ \Comment{Where $qr$ denotes the QR decomposition}
    \State $U \leftarrow Q$
    \State $q = \tilde{q} + U U^T (r-\tilde{q})$
\Until {$\|q-q_{\text{prev}}\|<\epsilon$}
\end{algorithmic}
\end{algorithm}

\begin{algorithm}[H]
\caption{Function Approximation}
\label{alg:ApproximateF}
\begin{algorithmic}[1]
\State Input: $\lbrace (r_i, \psi_i) \rbrace_{i=1}^N, r$
\State Output: $\widetilde{\psi}(r)$
\State Build a coordinate system $H$ around $r$ using $\lbrace r_i \rbrace_{i=1}^N$ (e.g., via Algorithm \ref{alg:FindH})
\State Project each $r_i\in \RR^n$ onto $H \rightarrow x_i \in \RR^d$
\State $ \widetilde{\psi}(r) $ is the solution of the weighted least-squares problem using the samples $\lbrace (x_i, \psi_i)\rbrace_{i=1}^N$ around $r$
\end{algorithmic}
\end{algorithm}
%
%
\section{Numerical Examples}
\label{sec:experimental}
Generally, it is desirable for an algorithm to have a few, but not too many, parameters for tuning purposes. In our case, in order to fine tune the application of the algorithm, one needs to decide how to set the weight function $\theta$ of equations \eqref{eq:step1}-\eqref{eq:Step2Psi}. In all of the examples bellow we have chosen to use the single parametric family of weight functions. For a given choice of the parameter $ k $ we define
\[
\theta_{k}(t) \defeq exp\left(\frac{-t^2}{(t - kh)^2}\right) \cdot \chi_{kh}
,\] 
where $\chi_{kh}$ is an indicator function of the interval $[-kh , kh]$. This function is $C^\infty$ and compactly supported. The minimal requirement for the support size is such that the local least-squares matrix would be invertible. We chose $k$ such that the support would contain about $3$ times the minimal required amount of points. 
Intuitively, increasing the number of points or the support of $\theta$ will make the procedure more robust to noise, but, on the other hand, will add bias to the result.

\subsection{A Function over a Helix}
The approximant yielded by our algorithm is defined over a neighborhood of the sampled manifold. 
To show this numerically we have sampled a function over the helix 
\[
\begin{array}{ll}
     x &= \sin(t)  \\
     y &= \cos(t) \\
     z &= t
\end{array}
,\]
for $t\in [-2\pi,2\pi]$, and the function used is $\psi(x,y,z)=z$.
We have added Gaussian noise to both domain $\NN(\sigma_{domain},0)$ and target $\NN(\sigma_{target}, 0)$.
Figure \ref{fig:Helix_thick} shows a case where the original data was sampled with $\sigma_{domain} = 0, \sigma_{target} = 6.25$ (Figure \ref{fig:Helix_thick}a). Then the evaluation is done for points sampled with $\sigma_{domain}=\sqrt{8 + z^2}$ (i.e., the noise varies w.r.t to the $z$ value; Figure \ref{fig:Helix_thick}b and c). As can be seen, the noise is smoothed out in $\widetilde\psi$.
\begin{figure}[h]
    \begin{centering}
        \includegraphics[width={1\linewidth}]{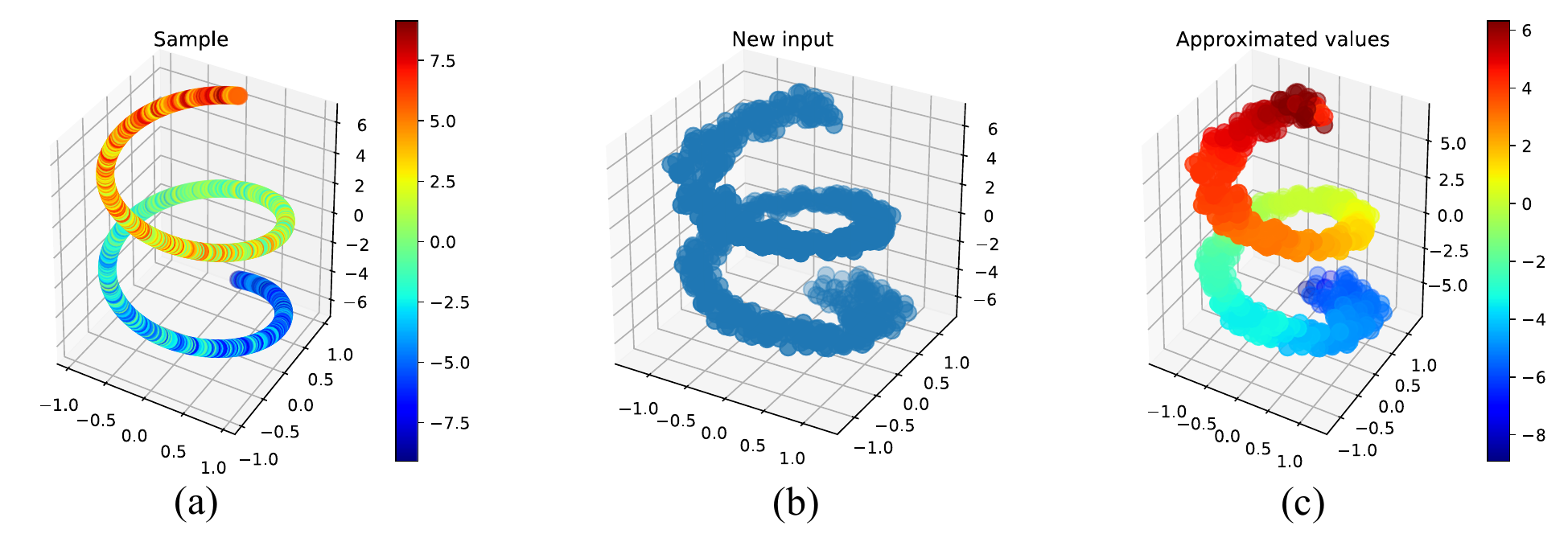}
        \par\end{centering}
        \caption{Approximation of the height function $f(x,y,z)=z$ defined over points on a helix: (a) a noisy sample; (b) new input; (c) approximation of $f$ on the new input.}\label{fig:Helix_thick}
\end{figure}

Figure \ref{fig:Helix_noisydomain} shows a case where the original data was sampled with $\sigma_{domain} = \sqrt{8+z^2}, \sigma_{target} = 6.25$ (Figure \ref{fig:Helix_noisydomain}b). Then the evaluation is done for the original samples locations in the domain.
Figure \ref{fig:Helix_noisydomain}c shows both the approximated projection onto the approximating manifold along with the value of $\widetilde \psi$.
sampled with $\sigma_{domain}=\sqrt{8 + z^2}$ (i.e., the noise varies w.r.t to the $z$ value; Figure \ref{fig:Helix_thick}b and c).
As can be seen, the geometry as well as the behaviour of $\psi$ are maintained in the approximant $\widetilde\psi$.

\begin{figure}[h]
    \begin{centering}
        \includegraphics[width={1\linewidth}]{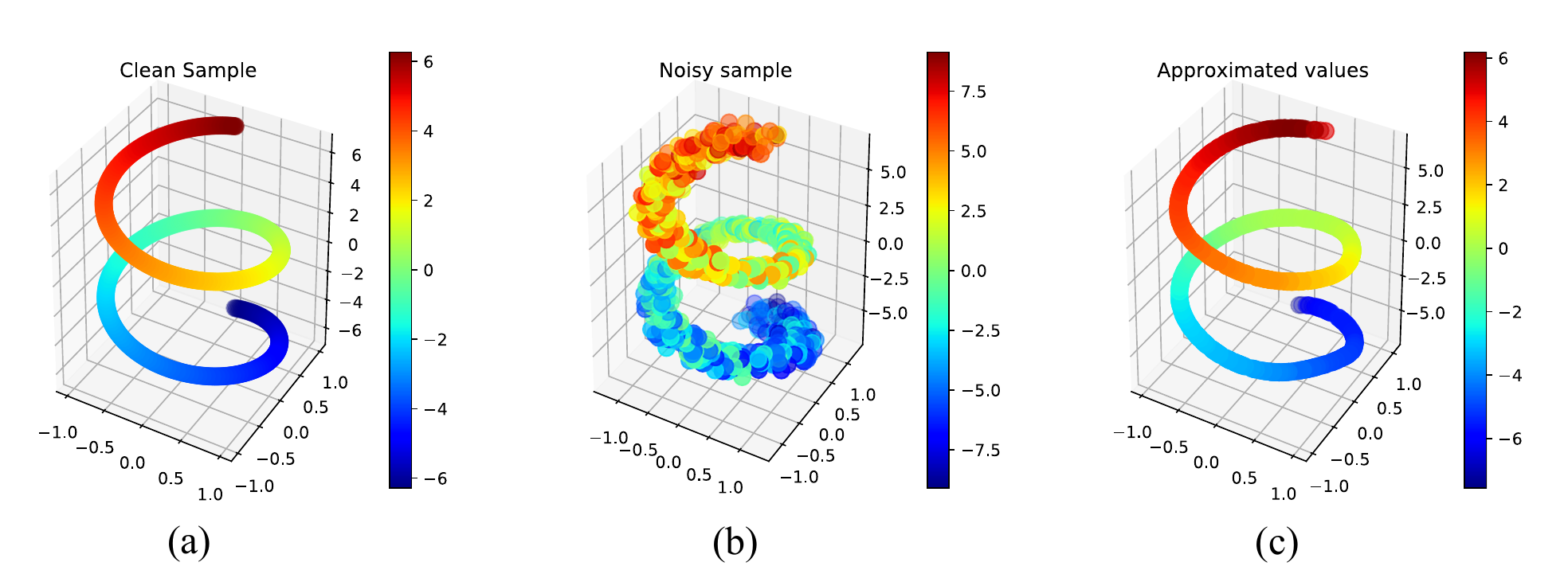}
        \par\end{centering}
        \caption{Approximation of the height function $f(x,y,z)=z$ sampled over points near a helix: (a) a clean helix with $f$ as a color map; (b) the given sample set; (c) approximation of $f$ evaluated on the points in the sample set, as well as the projection of the points $(x,y,z)$ onto the approximating manifold $\MM$ using the Manifold-MLS.}\label{fig:Helix_noisydomain}
\end{figure}
\subsection{Approximation Order}
In Theorem \ref{thm:approximationM-MLSFunction} we show that, given clean $h$-$\rho$-$\delta$ sample sets (for fixed $\rho$ and $\delta$), our function approximation scheme yields an approximation order of $\mathcal{O}(h^{m+1})$, where $m$ is the total degree of the local polynomial.
Denote the error of approximation of a point using a $h_i$-$\rho$-$\delta$ set by $err_{h_i}$.
In this experiment, we show numerically that 
\begin{equation*}
err_{h} \approx M h^{m+1},
\end{equation*} 
or, in other words, for $h_1$ and $h_2$:
\begin{equation}\label{eq:numApproxOrd}
\log\frac{err_{h_1}}{err_{h_2}} \approx (m+1) \log\frac{h_1}{h_2}.
\end{equation} 

In order to show that Equation \eqref{eq:numApproxOrd} holds, we take $N$ points on the unit sphere $S^2$ chosen on an equispaced grid in the spherical coordinate system (excluding the $r$ coordinate), for $N=20^2,30^2,\ldots, 80^2$. Samples from this distribution is a good-enough approximation for  $h$-$\rho$-$\delta$ sets with fixed $\rho, \delta$ parameters. The function that we approximate, $\psi:\RR^3 \rightarrow \RR^2$, match any point on the sphere with its spherical coordinates $(\phi,\theta)\in [0,2\pi)\times [0,\pi)$.
For any pair $\{N_i,N_j\} \subset \{20^2,30^2,\ldots, 80^2\}$ we estimate $h_k$ by $1/\sqrt{N_k}$.
Then, in order to estimate the slope, we perform a least-squares linear fit using the points $$\left(\log\frac{h_i}{h_j}, \log\frac{err_{h_i}}{err_{h_j}}\right)$$
In Figure \ref{fig:ApproxOrd} the small blue dots represent $\left(\log\frac{h_i}{h_j}, \log\frac{err_{h_i}}{err_{h_j}}\right)$ for an approximation using a first degree polynomial ($m=1$), and similarly, the larger green dots correspond to $m=3$.
The dashed line and the full line are the linear fits for the $m=1$ and $m=3$ data points respectively. The slopes of the lines are $1.933$ and $4.081$, which is similar to $m+1$ in both cases.
\begin{figure}[h]
    \begin{centering}
        \includegraphics[width={0.6\linewidth}]{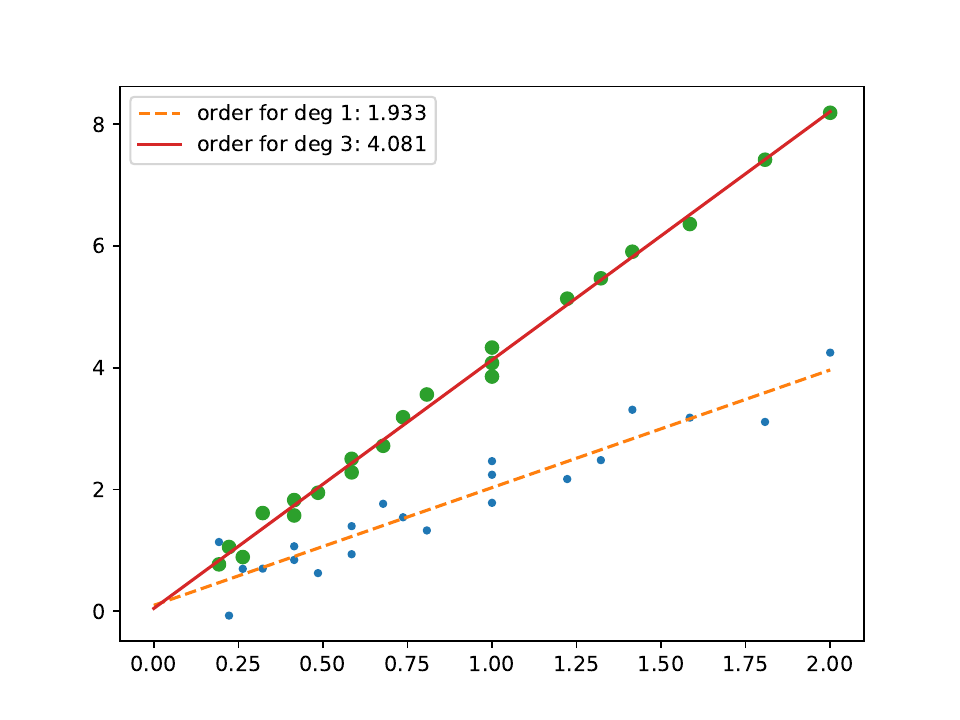}
        
        \par\end{centering}
    
    \caption{Estimating the approximation order for $m=1,3$. For any pair $\{N_i,N_j\} \subset \{20^2,30^2,\ldots, 80^2\}$, we plotted the points $\left(\log\frac{h_i}{h_j}, \log\frac{err_{h_i}}{err_{h_j}}\right)$. The case of $m=1$ is represented by the small blue dots and $m=3$ is represented by the larger green dots. The slope of the linear fit, to each set of points, gives an estimate to the approximation order, which is nearly $m+1$ in both cases.}\label{fig:ApproxOrd}
    
\end{figure}

\subsection{Large dimensional ambient space}
The dataset in this experiment included a set of $72$ gray-scale images of size $448\times 416$ pixels. The images are taken from the unprocessed dataset of \cite{nene1996columbia}. They are 2D projections of a 3D piggy bank obtained through rotating the object by $72$ equispaced angles on a single axis. An example of the images is given in Figure \ref{fig:piggyImages}. The approximated function $\psi$ is the angle of rotation. Therefore our dataset consists of $72$ samples of a 1-dimensional manifold embedded in $\RR^{186368}$ ($186368 = 448\times 416$) along with scalar values representing the angle of rotation. 

In order to assess the presented algorithm, we used the leave-one-out cross-validation scheme. 
In each iteration, one image, chosen at random, is taken out of the dataset, and its angle is estimated using the angles of the other images.

Using $m=1$, with 50 experiments, the average error is $0.0066$ and the variance is $5.7\cdot 10^{-5}$. However, when using $m=3$ the average error is $0.06$ and the variance is $0.02$. This decrease in accuracy can be explained by the fact that higher order approximations require more data points, which means that the locality of the approximation is compromised.

\begin{figure}[H]
    \begin{centering}
        \includegraphics[width={0.14\linewidth}]{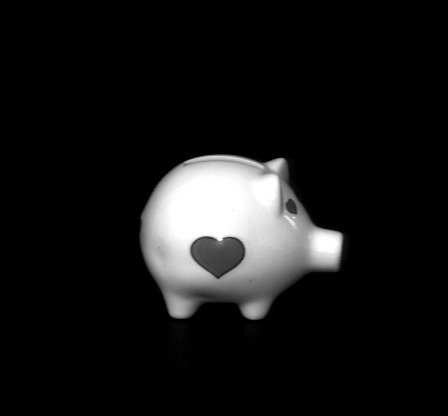}
        \includegraphics[width={0.14\linewidth}]{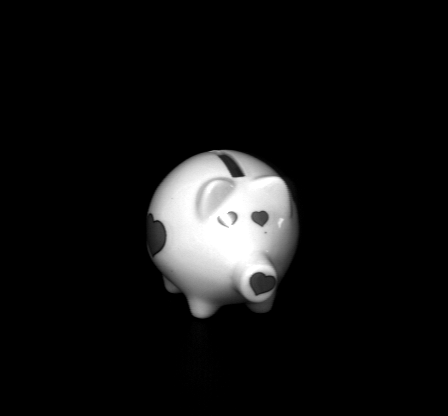}
        \includegraphics[width={0.14\linewidth}]{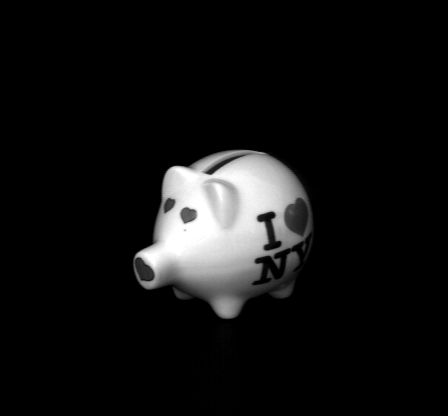}
        \includegraphics[width={0.14\linewidth}]{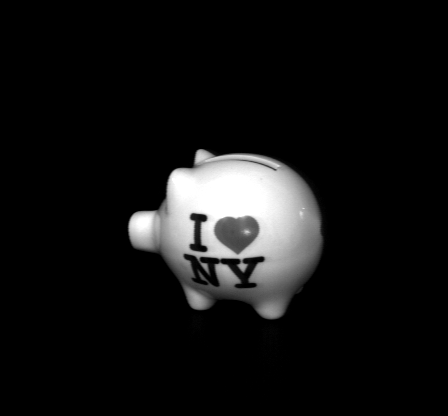}
        \includegraphics[width={0.14\linewidth}]{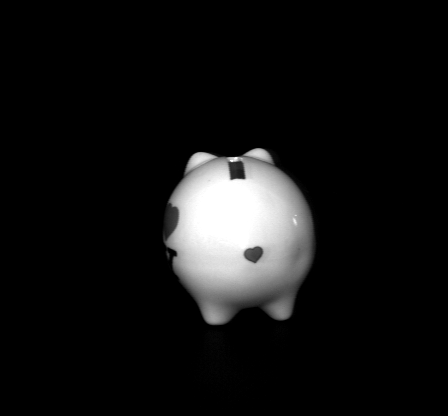}
        \includegraphics[width={0.14\linewidth}]{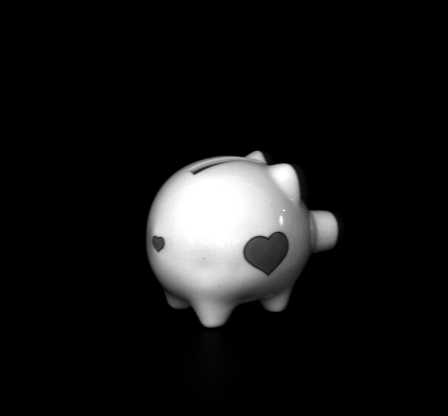}
        
        \par\end{centering}
    
    \caption{A part of the dataset of \cite{nene1996columbia}, consisting of $72$ images of a rotating piggy bank. Each image is of size $448\times 416$ pixels. Thus, we approximate a function $\psi: \RR^{448\times 416} \rightarrow [0,2\pi)$, that returns the rotation angle for each given image. \label{fig:piggyImages}}
\end{figure}
%

\subsection{Regression Over a Klein Bottle}
In the following example we compare the algorithm presented here to the algorithms presented in \cite{aswani2011regression,cheng2013local}. The setting is taken from section 5.1 in \cite{cheng2013local}. 
Let $\MM$ be the Klein bottle, a two-dimensional closed and smooth manifold, embedded in $\RR^4$, which is parametrized by $\phi_{Klein}:[0,2\pi)\times[0,2\pi) \rightarrow\RR^4$ as 
$$
(u,v) \rightarrow((2\cos v + 1)\cos u, (2 \cos v + 1) \sin u, 2 \sin v \cos(u/2), 2 \sin v \sin(u/2)).
$$
We sample $n$ points $(u_i,v_i)$ uniformly from $[0,2\pi)\times[0,2\pi)$ and obtain the corresponding points $p_i = \phi_{Klein}(u_i,v_i)$. Our sampled points are based on $p_i$ with added noise. Explicitly, $r_i = p_i + \sigma_r \eta$ where $\eta$ is a four dimensional normal random variable with zero-mean and identity covariance matrix, and $\sigma_r$ is a parameter that changes in the experiment.

The function $\psi$ that we approximate is defined as 
$$\psi(p) = 7 \sin(4u)+5 \cos^2(2v) + 6 e^{-32((u-\pi )^2 + (v-\pi)^2)},$$
where $(u,v) = \phi_{Klein}^{-1}(p)$. The samples that we have of $\psi$ are $\psi_i = \psi(p_i) + \sigma(p) \epsilon$ where $\epsilon \sim \NN(0,1)$ and $\sigma(p) = \sigma_0 (1+0.1\cos(u) + 0.1 \sin(v))$, where $\sigma_0$ determines the signal-to-noise ratio, defined by:
$$
\text{snrdb}= 10 \log_{10}\left(\frac{\text{var } \psi(\{p_i|i=1\ldots n\})}{\sigma_0^2}\right)
$$

We follow the same eight experiments done in \cite{cheng2013local} and compare the results to \cite{aswani2011regression, cheng2013local}. The experiments parameters are: $n=1000$ or $1500$, $\text{snrdb}=5$ or $2$, and $\sigma_r = 0$ or $0.2$, utilizing all the combinations. 
The root mean squared error and standard deviations were computed over 200 realizations.
As reported in \cite{cheng2013local} the MALLER method yielded significantly better results than all the other tested algorithms. Thus, we show here a performance comparison of our approach and MALLER alone (Table \ref{tab:accuracy}). For more details regarding the performance of the algorithms designed in \cite{aswani2011regression} see the original tables at \cite{cheng2013local}.

It is easy to see that for $m=1,3$ and $5$, the Manifold-MLS algorithm achieves more accurate results (the results for $m=2$ and $4$ are similar). 

For running time measurements of the Manifold-MLS, we used a laptop with Intel i7 -6700HQ core with 16 GB RAM. We compared our timing against the fastest method reported in \cite{cheng2013local}, which is NEDE \cite{aswani2011regression} (Table \ref{tab:timing}). The timing of NEDE, quoted from \cite{cheng2013local}, based upon a server with 96 GB of RAM, two Intel Xeon X5570 CPUs, each with four cores running at 2.93GHz.

\renewcommand{\arraystretch}{1.3}

\begin{table}
    \begin{center}
        \begin{tabular}{|c|cccc|}
            \hline
            \multirow{3}{*}{Alg} & \multicolumn{4}{c|}{$\sigma_r=0$} \\ \cline{2-5}
            &\multicolumn{2}{c}{$n=1500$}  & \multicolumn{2}{c|}{$n=1000$} \\ 
             &$\text{snrdb}=5$&$\text{snrdb}=2$&$\text{snrdb}=5$&$\text{snrdb}=2$\\ \hline\hline
            
            \begin{tabular}{c}
                 Best performance from \cite{cheng2013local}\\ (MALLER)
            \end{tabular} & $2.36\pm0.68$&$2.64\pm 0.69$&$2.69\pm0.69$&$2.94\pm0.71$\\
            \hline
            Manifold-MLS ($m=1$) &$1.51\pm0.34$ & $1.21\pm0.30$ & $1.53\pm 0.41$ & $1.77\pm 0.43$ \\
            Manifold-MLS ($m=3$) &$1.41\pm0.35$ & $1.05\pm0.28$ & $1.23\pm 0.35$ & $1.56\pm 0.42$ \\
            Manifold-MLS ($m=5$) &$1.51\pm0.37$ & $1.07\pm0.27$ & $1.27\pm 0.33$ & $1.73\pm 0.42$ \\

            \hline
        \end{tabular}

    \bigskip

        \begin{tabular}{|c|cccc|}
            \hline
            \multirow{3}{*}{Alg} & \multicolumn{4}{c|}{$\sigma_r=0.2$}  \\ \cline{2-5}
             & \multicolumn{2}{c}{$n=1500$} &\multicolumn{2}{c|}{$n=1000$} \\ 
            &$\text{snrdb}=5$&$\text{snrdb}=2$&$\text{snrdb}=5$&$\text{snrdb}=2$\\ \hline\hline
            
            \begin{tabular}{c}
                 Best performance from \cite{cheng2013local}\\ (MALLER)
            \end{tabular} & $3.85\pm0.77$&$3.86\pm0.77$&$4.00\pm0.71$&$4.16\pm0.78$\\
            \hline
            Manifold-MLS ($m=1$) &$3.11\pm0.82$ & $2.87\pm0.75$ & $3.02\pm 0.72$ & $3.08\pm 0.78$\\
            Manifold-MLS ($m=3$) &$2.97\pm0.72$ & $2.76\pm0.78$ & $2.88\pm 0.75$ & $3.05\pm 0.88$\\
            Manifold-MLS ($m=5$) &$2.87\pm0.70$ & $2.62\pm0.61$ & $2.95\pm 0.81$ & $3.21\pm 0.79$\\
            
            \hline
        \end{tabular}
        \caption{Accuracy of approximation - compared with the best performing algorithm out of 8 tested algorithms in \cite{cheng2013local}. The root mean squared error and standard deviations were computed over 200 realizations}  
        \label{tab:accuracy}
    \end{center}    
\end{table}

\begin{table}
    \begin{center}
        \makebox[\textwidth]{
            \begin{tabular}{|c|cc|}
                \hline
                Alg & $n=1500$&$n=1000$\\   \hline\hline
                
                \begin{tabular}{c}
                 Best runtime from \cite{cheng2013local}\\ (NEDE)
            \end{tabular} & $6.04\pm 0.16$ &$5.59\pm 0.15$\\
                \hline
                Manifold-MLS ($m=1$) &$2.18\pm0.02$ & $1.47\pm 0.02$\\
                Manifold-MLS ($m=3$) &$2.86\pm0.02$ & $1.95\pm 0.02$\\
                Manifold-MLS ($m=5$) &$3.73\pm0.02$ & $2.63\pm 0.03$\\
                
                \hline
        \end{tabular}}
        \caption{Time for computing the  approximation - compared with the fastest performing algorithm out of 8 tested algorithms in \cite{cheng2013local}}  
        \label{tab:timing}
    \end{center}    
\end{table}

\section{Acknowledgments}
\noindent We wish to thank the authors of \cite{cheng2013local} who shared their code with us for comparison purposes. 
This research was partially supported by the Israel Science Foundation (ISF 1556/17), Blavatink ICRC Funds, Fellowships from Jyv\"{a}skyl\"{a} University and the Clore Foundation.
\bibliography{mybib}{}

\begin{thebibliography}{10}

\bibitem{aizenbud2016SVD}
Yariv Aizenbud and Amir Averbuch.
\newblock Matrix decompositions using sub-{G}aussian random matrices.
\newblock {\em Information and Inference: A Journal of the {IMA}}, 2018.

\bibitem{aizenbud2015OutOfSample}
Yariv Aizenbud, Amit Bermanis, and Amir Averbuch.
\newblock {PCA}-based out-of-sample extension for dimensionality reduction.
\newblock {\em arXiv preprint arXiv:1511.00831}, 2015.

\bibitem{aizenbud2019approximating}
Yariv Aizenbud and Barak Sober.
\newblock Approximating the span of principal components via iterative
  least-squares.
\newblock {\em arXiv preprint arXiv:1907.12159}, 2019.

\bibitem{alexa2003mesh.cont}
Marc Alexa, Johannes Behr, Daniel Cohen-Or, Shachar Fleishman, David Levin, and
  Claudio~T Silva.
\newblock Computing and rendering point set surfaces.
\newblock {\em Visualization and Computer Graphics, IEEE Transactions on},
  9(1):3--15, 2003.

\bibitem{allard2012multi}
William~K Allard, Guangliang Chen, and Mauro Maggioni.
\newblock Multi-scale geometric methods for data sets ii: Geometric
  multi-resolution analysis.
\newblock {\em Applied and Computational Harmonic Analysis}, 32(3):435--462,
  2012.

\bibitem{aswani2011regression}
Anil Aswani, Peter Bickel, and Claire Tomlin.
\newblock Regression on manifolds: Estimation of the exterior derivative.
\newblock {\em The Annals of Statistics}, pages 48--81, 2011.

\bibitem{backus1968resolving}
George Backus and Freeman Gilbert.
\newblock The resolving power of gross earth data.
\newblock {\em Geophysical Journal International}, 16(2):169--205, 1968.

\bibitem{backus1967numerical}
George~E Backus and JF~Gilbert.
\newblock Numerical applications of a formalism for geophysical inverse
  problems.
\newblock {\em Geophysical Journal International}, 13(1-3):247--276, 1967.

\bibitem{belkin2003laplacian}
Mikhail Belkin and Partha Niyogi.
\newblock Laplacian eigenmaps for dimensionality reduction and data
  representation.
\newblock {\em Neural computation}, 15(6):1373--1396, 2003.

\bibitem{bellman1957dynamic}
Richard Bellman.
\newblock {\em Dynamic Programming}.
\newblock Princeton University Press, Princeton, NJ, USA, 1 edition, 1957.

\bibitem{bickel2007local}
Peter~J Bickel, Bo~Li, et~al.
\newblock Local polynomial regression on unknown manifolds.
\newblock In {\em Complex datasets and inverse problems}, pages 177--186.
  Institute of Mathematical Statistics, 2007.

\bibitem{binev2007universal}
Peter Binev, Albert Cohen, Wolfgang Dahmen, and Ronald DeVore.
\newblock Universal algorithms for learning theory. part ii: Piecewise
  polynomial functions.
\newblock {\em Constructive approximation}, 26(2):127--152, 2007.

\bibitem{binev2005universal}
Peter Binev, Albert Cohen, Wolfgang Dahmen, Ronald DeVore, and Vladimir
  Temlyakov.
\newblock Universal algorithms for learning theory part i: piecewise constant
  functions.
\newblock {\em Journal of Machine Learning Research}, 6(Sep):1297--1321, 2005.

\bibitem{bos1989moving}
LP~Bos and K~Salkauskas.
\newblock Moving least-squares are backus-gilbert optimal.
\newblock {\em Journal of Approximation Theory}, 59(3):267--275, 1989.

\bibitem{chen2013multi}
Guangliang Chen, Anna~V Little, and Mauro Maggioni.
\newblock Multi-resolution geometric analysis for data in high dimensions.
\newblock In {\em Excursions in Harmonic Analysis, Volume 1}, pages 259--285.
  Springer, 2013.

\bibitem{cheng2013local}
Ming-Yen Cheng and Hau-tieng Wu.
\newblock Local linear regression on manifolds and its geometric
  interpretation.
\newblock {\em Journal of the American Statistical Association},
  108(504):1421--1434, 2013.

\bibitem{coifman2006diffusion}
Ronald~R Coifman and St{\'e}phane Lafon.
\newblock Diffusion maps.
\newblock {\em Applied and computational harmonic analysis}, 21(1):5--30, 2006.

\bibitem{donoho2000highDimensions}
David~L Donoho et~al.
\newblock High-dimensional data analysis: The curses and blessings of
  dimensionality.
\newblock {\em AMS Math Challenges Lecture}, pages 1--32, 2000.

\bibitem{federer1959curvature}
Herbert Federer.
\newblock Curvature measures.
\newblock {\em Transactions of the American Mathematical Society},
  93(3):418--491, 1959.

\bibitem{hughes1968mean}
G~Hughes.
\newblock On the mean accuracy of statistical pattern recognizers.
\newblock {\em Information Theory, IEEE Transactions on}, 14(1):55--63, 1968.

\bibitem{jolliffe2002PCA}
Ian Jolliffe.
\newblock {\em Principal component analysis}.
\newblock Wiley Online Library, 2002.

\bibitem{jones2008manifold}
Peter~W Jones, Mauro Maggioni, and Raanan Schul.
\newblock Manifold parametrizations by eigenfunctions of the laplacian and heat
  kernels.
\newblock {\em Proceedings of the National Academy of Sciences},
  105(6):1803--1808, 2008.

\bibitem{kohonen2001SOMbook}
Teuvo Kohonen.
\newblock {\em Self-organizing maps}, volume~30.
\newblock Springer Science \& Business Media, 2001.

\bibitem{NIPS2011_4455}
Samory Kpotufe.
\newblock k-nn regression adapts to local intrinsic dimension.
\newblock In J.~Shawe-Taylor, R.~S. Zemel, P.~L. Bartlett, F.~Pereira, and
  K.~Q. Weinberger, editors, {\em Advances in Neural Information Processing
  Systems 24}, pages 729--737. Curran Associates, Inc., 2011.

\bibitem{NIPS2013_5103}
Samory Kpotufe and Vikas Garg.
\newblock Adaptivity to local smoothness and dimension in kernel regression.
\newblock In C.~J.~C. Burges, L.~Bottou, M.~Welling, Z.~Ghahramani, and K.~Q.
  Weinberger, editors, {\em Advances in Neural Information Processing Systems
  26}, pages 3075--3083. Curran Associates, Inc., 2013.

\bibitem{lancaster1981surfaces}
Peter Lancaster and Kes Salkauskas.
\newblock Surfaces generated by moving least squares methods.
\newblock {\em Mathematics of computation}, 37(155):141--158, 1981.

\bibitem{lee2007nonlinear}
John~A Lee and Michel Verleysen.
\newblock {\em Nonlinear dimensionality reduction}.
\newblock Springer Science \& Business Media, 2007.

\bibitem{levin1998MLSapproximation}
David Levin.
\newblock The approximation power of moving least-squares.
\newblock {\em Mathematics of Computation of the American Mathematical
  Society}, 67(224):1517--1531, 1998.

\bibitem{levin2004mesh}
David Levin.
\newblock Mesh-independent surface interpolation.
\newblock In {\em Geometric modeling for scientific visualization}, pages
  37--49. Springer, 2004.

\bibitem{little2017multiscale}
Anna~V Little, Mauro Maggioni, and Lorenzo Rosasco.
\newblock Multiscale geometric methods for data sets i: Multiscale svd, noise
  and curvature.
\newblock {\em Applied and Computational Harmonic Analysis}, 43(3):504--567,
  2017.

\bibitem{maggioni2015geometric}
Mauro Maggioni, Stanislav Minsker, and Nate Strawn.
\newblock Geometric multi-resolution analysis for dictionary learning.
\newblock In {\em Wavelets and Sparsity XVI}, volume 9597, page 95971C.
  International Society for Optics and Photonics, 2015.

\bibitem{mclain1974drawing}
Dermot~H McLain.
\newblock Drawing contours from arbitrary data points.
\newblock {\em The Computer Journal}, 17(4):318--324, 1974.

\bibitem{nealen2004short}
Andrew Nealen.
\newblock An as-short-as-possible introduction to the least squares, weighted
  least squares and moving least squares methods for scattered data
  approximation and interpolation.
\newblock {\em URL: http://www. nealen. com/projects}, 130:150, 2004.

\bibitem{nene1996columbia}
Sameer~A Nene, Shree~K Nayar, Hiroshi Murase, et~al.
\newblock Columbia object image library (coil-20).
\newblock 1996.

\bibitem{roweis2000LLE}
Sam~T Roweis and Lawrence~K Saul.
\newblock Nonlinear dimensionality reduction by locally linear embedding.
\newblock {\em Science}, 290(5500):2323--2326, 2000.

\bibitem{ruppert1994multivariate}
David Ruppert and Matthew~P Wand.
\newblock Multivariate locally weighted least squares regression.
\newblock {\em The annals of statistics}, pages 1346--1370, 1994.

\bibitem{saul2003LLE}
Lawrence~K Saul and Sam~T Roweis.
\newblock Think globally, fit locally: unsupervised learning of low dimensional
  manifolds.
\newblock {\em The Journal of Machine Learning Research}, 4:119--155, 2003.

\bibitem{scholkopf1998KPCA}
Bernhard Sch{\"o}lkopf, Alexander Smola, and Klaus-Robert M{\"u}ller.
\newblock Nonlinear component analysis as a kernel eigenvalue problem.
\newblock {\em Neural computation}, 10(5):1299--1319, 1998.

\bibitem{smola2004SVR}
Alex~J Smola and Bernhard Sch{\"o}lkopf.
\newblock A tutorial on support vector regression.
\newblock {\em Statistics and computing}, 14(3):199--222, 2004.

\bibitem{Sober2019MMLS}
Barak Sober and David Levin.
\newblock Manifold approximation by moving least-squares projection (mmls).
\newblock {\em Constructive Approximation}, Dec 2019.

\bibitem{tenenbaum2000isomap}
Joshua~B Tenenbaum, Vin De~Silva, and John~C Langford.
\newblock A global geometric framework for nonlinear dimensionality reduction.
\newblock {\em Science}, 290(5500):2319--2323, 2000.

\bibitem{wang2016high}
Yi~Wang, Guangliang Chen, and Mauro Maggioni.
\newblock High-dimensional data modeling techniques for detection of chemical
  plumes and anomalies in hyperspectral images and movies.
\newblock {\em IEEE Journal of Selected Topics in Applied Earth Observations
  and Remote Sensing}, 9(9):4316--4324, 2016.

\end{thebibliography}
\bibliographystyle{plain}

\end{document}